\newcommand\eqdef{\mathrel{\stackrel{\makebox[0pt]{\mbox{\normalfont\tiny def}}}{=}}}
\begin{document}
\frontmatter          % for the preliminaries
\pagestyle{headings}  % switches on printing of running heads
%\addtocmark{Hamiltonian Mechanics} % additional mark in the TOC
%
%
\mainmatter              % start of the contributions
\title{Multi-task and Lifelong Learning of Kernels}

\author{Anastasia Pentina\inst{1} \and Shai Ben-David\inst{2}}

\institute{Institute of Science and Technology Austria, 3400 Klosterneuburg Austria\\
\email{apentina@ist.ac.at}
\and
University of Waterloo, School of Computer Science, Waterloo ON, Canada\\
\email{shai@uwaterloo.ca}}

%\and
%shai@uwaterloo.ca}
\maketitle              % typeset the title of the contribution

\begin{abstract}
We consider a problem of learning kernels for use in SVM classification in the multi-task and lifelong scenarios and provide generalization bounds on the error of a large margin classifier.
Our results show that, under mild conditions on the family of kernels used for learning, solving several related tasks simultaneously is beneficial over single task learning.
In particular, as the number of observed tasks grows, assuming that in the considered family of kernels there exists one that yields low approximation error on all tasks, the overhead associated with learning such a kernel vanishes and the complexity converges to that of learning when this good kernel is given to the learner.
\keywords{Multi-task learning, lifelong learning, kernel learning}
\end{abstract}
\begin{center}
\textbf{There is a mistake in the conference version of the manuscript: in Theorem 4 on the right hand side there should be $N_{(D,2n)}$ instead of $N_{(D,n)}$. This results in the additional constant $32$ in Theorem 5. \\
%This is the corrected version of the manuscript.
}
\end{center}

\section{Introduction}

State-of-the-art machine learning algorithms are able to solve many problems sufficiently well.
However, both theoretical and experimental studies have shown that in order to achieve solutions of reasonable quality they need an access to extensive amounts of training data.
In contrast, humans are known to be able to learn concepts from just a few examples.
A possible explanation may lie in the fact that humans are able to reuse the knowledge they have gained from previously learned tasks for solving a new one, while traditional machine learning algorithms solve tasks in isolation.
This observation motivates an alternative, transfer learning approach.
It is based on idea of transferring information between related learning tasks in order to improve performance.

There are various formal frameworks for transfer learning, modeling different learning scenarios.
In this work we focus on two of them: the multi-task and the lifelong settings.
In the multi-task scenario, the learner faces a fixed set of learning tasks simultaneously and its goal is to perform well on all of them.
In the lifelong learning setting, the learner encounters a stream of tasks and its goal is to perform well on new, yet unobserved tasks.

For any transfer learning scenario to make sense (that is, to benefit from the multiplicity of tasks), there must be some kind of relatedness between the tasks.
A common way to model such task relationships is  through the assumption that there exists some data representation under which learning each of the tasks is relatively easy. 
The corresponding transfer learning methods aim at learning such a representation.

In this work we focus on the case of large-margin learning of kernels. 
We consider sets of tasks and families of kernels and analyze the sample complexity of finding a kernel in a kernel family that allows low expected error on average over the set of tasks (in the multi-task scenario), or in expectation with respect to some unknown task-generating probability distribution (in the lifelong scenario).
We provide generalization bounds for empirical risk minimization learners for both settings.
Under the assumption that the considered kernel family has finite pseudodimension, we show that by learning several tasks simultaneously the learner is guaranteed to have low estimation error with fewer training samples per task (compared to solving them independently).
In particular, if there exists a kernel with low approximation error for all tasks, then, as the number of  observed tasks grows, the problem of learning any specific task with respect to a family of kernels converges to learning when the learner knows a good kernel in advance - the multiplicity of tasks relieves the overhead associated with learning a kernel.
Our assumption on finite pseudodimension of the kernel family is satisfied in many practical cases, like families of Gaussian kernels with a learned covariance matrix,  and linear and convex combinations of a finite set of kernels (see~\cite{Srebro}).
We also show that this is the case for families of all sparse combinations of kernels from a large ``dictionary" of kernels.
\subsection{Related previous work}
\noindent\textbf{Multi-task and Lifelong Learning.} 
A method for learning a common feature representation for linear predictors in the multi-task scenario was proposed in~\cite{Argyriou}.
A similar idea was also used by~\cite{Daume} and extended to the lifelong scenario by~\cite{ELLA}.
A natural extension of representation learning approach was proposed for kernel methods in~\cite{Jebara-2004,Jebara:2011}, where the authors described a method for learning a kernel that is shared between tasks as a combination of some base kernels using maximum entropy discrimination approach.
A similar approach, with additional constraints on sparsity of kernel combinations, was used by~\cite{tnn2011}.
These ideas were later generalized to the case, when related tasks may use slightly different kernel combinations~\cite{GonenKK11,ZhouJH10}, and successfully used in practical applications~\cite{5259,SamCAIP11}.

Despite intuitive attractiveness of the possibility of automatically learning a suitable feature representation compared to learning with a fixed, perhaps high-dimensional or just irrelevant set of features, relatively little is known about its theoretical justifications. 
A seminal systematic theoretical study of the multi-task/lifelong learning settings was done by Baxter in~\cite{Baxter}.
There the author provided sample complexity bounds for both scenarios under the assumption that the tasks share a common optimal hypothesis class.
The possible advantages of these approaches according to Baxter's results depend on the behavior of complexity terms, which, however, due to the generality of the formulation, often can not be inferred easily given a particular setting.
Therefore, studying more specific scenarios by using more intuitive complexity measures may lead to better understanding of the possible benefits of the multi-task/lifelong settings, even if, in some sense, they can be viewed as particular cases of Baxter's result.
Along that line, Maurer in~\cite{Maurer} proved that learning a common low-dimensional representation in the case of lifelong learning of linear least-squares regression tasks is beneficial.
%\newline

\noindent\textbf{Multiple Kernel Learning.} The problem of multiple kernel learning in the single-task scenario has been theoretically analyzed using different techniques.
By using covering numbers, Srebro et al in~\cite{Srebro} have shown generalization bounds with additive dependence on the pseudodimension of the kernel family.
Another bound with multiplicative dependence on the pseudodimension was presented in~\cite{Ying2009}, where the authors used Rademacher chaos complexity measure.
Both results have a form $O(\sqrt{d/m})$, where $d$ is the pseudodimension of the kernel family and $m$ is the sample size.
%
%In the case of a kernel family that consists of all linear combination of $k$ kernels with $l_1$ constraints on the weights it would lead to bounds of a form $O(\sqrt{k/m})$.
%
By carefully analyzing the growth rate of the Rademacher complexity in the case of the linear combinations of finitely many kernels with $l_p$ constraint on the weights, Cortes et al in~\cite{DBLP:conf/icml/CortesMR10a} have improved the above results.
In particular, in the case of $l_1$ constraints, the bound from~\cite{Srebro} has a form $O(\sqrt{k/m})$, where $k$ in the total number of kernels, while the bound from~\cite{DBLP:conf/icml/CortesMR10a} is $O(\sqrt{\log(k)/m})$.
%
%However, due to different nature of the dependence on the $k$ (additive in~\cite{Srebro} and multiplicative in~\cite{DBLP:conf/icml/CortesMR10a}) these results are not directly comparable.
%
The fast rate analysis of the linear combinations of kernels using local Rademacher complexities was performed by Kloft et al in~\cite{DBLP:journals/jmlr/KloftB12}.

In this work we utilize techniques from~\cite{Srebro}.
It allows us to formulate results that hold for any kernel family with finite pseudodimension and not only for the case of linear combinations, though at the price of potentially suboptimal dependence on the number of kernels in the latter case.
Moreover, additive dependence on the pseudodimension is especially appealing for the analysis of the multi-task and lifelong scenarios, as it allows obtaining bounds where that additional complexity term vanishes as the number of tasks grows and therefore these bounds clearly show possible advantages of transfer learning.
%
%
%\newline

We start by describing the formal set up and preliminaries in Section~\ref{sec:setup},\ref{sec:cov_numbers} and providing a list of known kernel families with finite pseudodimensions, including our new result for sparse linear combinations, in~\ref{sec:pseudodim}.
In Section~\ref{sec:mt} we provide the proof of the generalization bound for the multi-task case and extend it to the lifelong setting in Section~\ref{sec:ll}.
We conclude by discussion in Section~\ref{sec:conc}.

\section{Preliminaries}
\label{sec:prel}
\subsection{Formal Setup}
\label{sec:setup}

Throughout the paper we denote the input space by $X$ and the output space by $Y=\{-1,1\}$.
We assume that the learner (both in the multi-task and the lifelong learning scenarios) has an access to $n$ tasks represented by the corresponding training sets $\mathbf{z_1},\dots,\mathbf{z_n}\in(X\times Y)^m$, where each $\mathbf{z_i}=\{(x_{i1},y_{i1}),\dots,(x_{im},y_{im})\}$ consists of $m$ i.i.d. samples from some unknown task-specific data distribution $P_i$ over $Z=X\times Y$.
In addition we assume that the learner is given a family $\mathcal{K}$ of kernel functions\footnote{\scriptsize{A function $K:X\times X\rightarrow\mathbb{R}$ is called a kernel, if there exist a Hilbert space $\mathcal{H}$ and a mapping $\phi:X\rightarrow\mathcal{H}$ such that $K(x,x')=\langle\phi(x),\phi(x')\rangle$ for all $x,x'\in X$.}} defined on $X\times X$ and uses the corresponding set of linear predictors for learning.
Formally, for every kernel $K\in\mathcal{K}$ we define $\mathcal{F}_K$ to be such set: 
\begin{equation}
\mathcal{F}_K\eqdef\left\{h:x\mapsto\langle w,\phi(x)\rangle \; |\; \|w\|\leq1, K(x,x')=\langle\phi(x),\phi(x')\rangle\right\}
\end{equation}
and $\mathbb{H}$ to be the union of them: $\mathbb{H}=\cup_{K\in\mathcal{K}}\mathcal{F}_K$.

In the multi-task scenario the data distributions $P_1,\dots,P_n$ are assumed to be fixed and the goal of the learner is to identify a kernel $K\in\mathcal{K}$ that performs well on all of them.
Therefore we would like to bound the difference between the expected error rate over the tasks:
\begin{equation}
er(\mathcal{F}_K) = \frac{1}{n}\sum_{i=1}^n\inf_{h\in\mathcal{F}_K}\mathbf{E}_{(x,y)\sim P_i}\llbracket yh(x)<0\rrbracket
\end{equation}
and the corresponding empirical margin error rate:
\begin{equation}
\widehat{er}^\gamma_z(\mathcal{F}_K) = \frac{1}{n}\sum_{i=1}^n\inf_{h\in\mathcal{F}_K}\frac{1}{m}\sum_{j=1}^m\llbracket y_{ij}h(x_{ij})<\gamma\rrbracket.
\end{equation}
Alternatively the learner may be interested in identifying a particular predictor for every task. 
If we define $\mathcal{F}_K^n=\{\mathbf{h}=(h_1,\dots,h_n): h_i\in\mathcal{F}_K \;\forall i=1\dots n\}$ and $\mathbb{H}^n=\cup_{K}\mathcal{F}_K^n$, then it means finding some $\mathbf{h}\in\mathbb{H}^n$ with low generalization error:
\begin{equation}
er(\mathbf{h}) = \frac{1}{n}\sum_{i=1}^n\mathbf{E}_{(x,y)\sim P_i}\llbracket yh_i(x)<0\rrbracket
\end{equation}
based on its empirical margin performance:
\begin{equation}
\widehat{er}^\gamma_z(\mathbf{h}) = \frac{1}{n}\sum_{i=1}^n\frac{1}{m}\sum_{j=1}^m\llbracket y_{ij}h_i(x_{ij})<\gamma\rrbracket.
\end{equation} 
However, due to the following inequality, it is enough to bound the probability of large estimation error for the second case and a bound for the first one will follow immediately:
\begin{align*}
Pr\left\{z\in Z^{(n,m)}\; \exists \; K\in\mathcal{K}: er(\mathcal{F}_K)>\widehat{er}^\gamma_z(\mathcal{F}_K)+\epsilon\right\}\leq\\
Pr\left\{z\in Z^{(n,m)} \; \exists \; \mathbf{h}\in\mathbb{H}^n: er(\mathbf{h})>\widehat{er}^\gamma_z(\mathbf{h})+\epsilon\right\}.
\end{align*}

For the lifelong learning scenario we adopt the notion of task environment proposed  in~\cite{Baxter} and assume that there exists a set of possible data distributions (i.e. tasks) $\mathcal{P}$ and that the observed tasks are sampled from it i.i.d. according to some unknown distribution $Q$.
The goal of the learner is to find a kernel $K\in\mathcal{K}$ that would work well on future, yet unobserved tasks from the environment $(\mathcal{P},Q)$.
Therefore we would like to bound  the probability of large deviations between the expected error rate on new tasks, given by:  
\begin{equation}
er(\mathcal{F}_K)=\mathbb{E}_{P\sim Q}\inf_{h\in\mathcal{F}_K}\mathbb{E}_{(x,y)\sim P}\llbracket h(x)y<0\rrbracket,
\end{equation}
and the corresponding empirical margin error rate $\widehat{er}^\gamma_z(\mathcal{F}_K)$.

In order to obtain the generalization bounds in both cases we employ the technique of covering numbers.

\subsection{Covering numbers and Pseudodimensions}
\label{sec:cov_numbers}
In this subsection we describe the types of covering numbers we will need and establish their connections to pseudodimensions of kernel families.
\begin{definition}
A subset $\tilde{A}\subset A$ is called an $\epsilon$-cover of $A$ with respect to a distance measure $d$, if for every $a\in A$ there exists a $\tilde{a}\in\tilde{A}$ such that $d(a,\tilde{a})<\epsilon$. The covering number $N_d(A,\epsilon)$ is the size of the smallest $\epsilon$-cover of $A$.
\end{definition}
To derive bounds for the multi-task setting we will use covers of $\mathbb{H}^n$ with respect to $\ell_\infty$ metric associated with a sample $\mathbf{x}\in X^{(n,m)}$: 
\begin{equation}
d^\mathbf{x}_\infty(\mathbf{h},\tilde{\mathbf{h}})=\max_{i=1\dots n}\max_{j=1\dots m}|h_i(x_{ij})-\tilde{h}_i(x_{ij})|<\epsilon.
\end{equation}
The corresponding uniform covering number $N_{(n,m)}(\mathbb{H}^n,\epsilon)$ is given by considering all possible samples $\mathbf{x}\in X^{(n,m)}$:
\begin{equation}
N_{(n,m)}(\mathbb{H}^n,\epsilon)=\max_{\mathbf{x}\in X^{(n,m)}}N_{d^\mathbf{x}_\infty}(\mathbb{H}^n,\epsilon).
\end{equation} 

In contrast, for the lifelong learning scenario we will need covers of the kernel family $\mathcal{K}$ with respect to a probability distribution.
For any probability distribution $P$ over $X\times Y$, we denote its projection on $X$ by $P_X$ and define the following distance between the kernels:
\begin{equation}
D_P(K,\tilde{K})\!=\!\max\{\max_{h\in\mathcal{F}_K}\min_{h'\in\mathcal{F}_{\tilde{K}}}\underset{x\sim P_X}{\mathbb{E}}|h(x)-h'(x)|, \max_{h'\in\mathcal{F}_{\tilde{K}}}\min_{h\in\mathcal{F}_{K}}\underset{x\sim P_X}{\mathbb{E}}|h(x)-h'(x)|\}.
\end{equation}
Similarly, for any set of $n$ distributions $\mathbf{P}=(P_1,\dots,P_n)$ we define:
\begin{equation}
D_\mathbf{P}(K,\tilde{K})=\max_{i=1\dots n}D_{P_i}(K,\tilde{K}).
\end{equation}
The minimal size of the corresponding $\epsilon$-cover of a set of kernels $\mathcal{K}$ we will denote by $N_{D_\mathbf{P}}(\mathcal{K},\epsilon)$ and the corresponding uniform covering number by by $N_{(D,n)}(\mathcal{K},\epsilon)=\max_{(P_1,\dots,P_n)}N_{D_\mathbf{P}}(\mathcal{K},\epsilon)$.

In order to make the guarantees given by the generalization bounds,  that we provide, more intuitively appealing we state them using a natural measure of complexity of kernel families, namely, pseudodimension~\cite{Srebro}:
\begin{definition}
The class $\mathcal{K}$ pseudo-shatters the set of $n$ pairs of points\linebreak $(x_1,x_1'),\dots,(x_n,x_n')$ if there exist thresholds $t_1,\dots,t_n$ such that for any \linebreak
$b_1,\dots,b_n\in\{-1,+1\}$ there exists $K\in\mathcal{K}$ such that $sign(K(x_i,x_i')-t_i)=b_i$. The pseudodimension $d_\phi(\mathcal{K})$ is the largest $n$ such that there exists a set of $n$ pairs pseudo-shattered by $\mathcal{K}$.
\end{definition}
To do so we develop upper bounds on the covering numbers we use in terms of the pseudodimension of the kernel family $\mathcal{K}$.
First, we prove the result for $N_{(n,m)}(\mathbb{H}^n,\epsilon)$ that will be used in the multi-task setting:
\begin{lemma}
For any set $\mathcal{K}$ of kernels bounded by $B$($K(x,x)\leq B$ for all $K\in\mathcal{K}$ and all $x$) with pseudodimension $d_\phi$ the following inequality holds:
\begin{align*}
N_{(n,m)}(\mathbb{H}^n,\epsilon)\leq2^n\Big(\frac{4en^2m^3B}{\epsilon^2d_\phi}\Big)^{d_\phi}\Big(\frac{16mB}{\epsilon^2}\Big)^{\frac{64Bn}{\epsilon^2}\log\big(\frac{e\epsilon m}{8\sqrt{B}}\big)}.
\end{align*}
\label{lemma:N(nm)}
\end{lemma}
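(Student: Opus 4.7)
The plan is to follow the two-level covering strategy of Srebro and Ben-David for multiple kernel learning, adapted to $n$ tasks. Given any sample $\mathbf{x}\in X^{(n,m)}$, I would (i) cover the kernel family $\mathcal{K}$ itself with respect to entrywise deviation of Gram matrices on $\mathbf{x}$, and (ii) for each representative $K'$ in that cover, cover the associated product predictor class $\mathcal{F}_{K'}^n$ in the $d_\infty^\mathbf{x}$ metric. The resulting union of per-kernel covers yields a cover of $\mathbb{H}^n=\cup_K\mathcal{F}_K^n$ whose size is the product of the two counts, provided the cover widths at the two levels are balanced so that the combined substitution error is at most $\epsilon$.

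For the first level, $\mathcal{K}$ is a class of real-valued functions on $X\times X$ bounded by $B$, and on the sample $\mathbf{x}$ it is evaluated at $(nm)^2$ pairs of points. Applying the standard Pollard-type bound $N_\infty(\mathcal{F},\delta)\le(2eNB/\delta\,d_\phi)^{d_\phi}$ for a function class of pseudodimension $d_\phi$ on $N$ points, with $N=(nm)^2$ and width $\delta=\epsilon^2/(2m)$ (the scale needed by the change-of-kernel step below), produces the first factor $\bigl(4en^2m^3B/\epsilon^2 d_\phi\bigr)^{d_\phi}$ in the stated bound.

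For the second level, I would fix $K'$ and cover $\mathcal{F}_{K'}^n$ as a Cartesian product of $n$ independent covers of $\mathcal{F}_{K'}$ restricted to the task-specific samples $(x_{i1},\dots,x_{im})$, which is legitimate because $d_\infty^{\mathbf{x}}$ decomposes as a max over tasks. Each per-task cover is controlled by the classical covering number bound for unit-ball linear predictors in an RKHS with $K'(x,x)\le B$, giving at most $2\bigl(16mB/\epsilon^2\bigr)^{64B/\epsilon^2\log(e\epsilon m/8\sqrt{B})}$ elements; raising this to the $n$-th power yields both the $2^n$ prefactor and the linear-in-$n$ exponent in the second factor.

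The step I expect to be the main obstacle, and the one that ties the two cover widths together, is the change-of-kernel argument. Given $\mathbf{h}=(h_1,\dots,h_n)\in\mathcal{F}_K^n$ and a neighbour $K'$ whose Gram matrix on $\mathbf{x}$ matches that of $K$ to within $\epsilon^2/(2m)$, I would project each $w_i$ onto the span of $\{\phi(x_{ij})\}_{j=1}^m$ so that the sample values $h_i(x_{ij})$ depend only on the $m\times m$ Gram submatrix, then transport the coefficient vector into the RKHS of $K'$. Controlling both the resulting pointwise deviation on the sample and the norm inflation of the transported predictor dictates the factor $m$ in the first-level width and hence the exponent structure $\epsilon^2/(mB)$ showing up in both factors; once this estimate is in place, combining it with the triangle inequality and the per-task Zhang-type cover closes the argument.
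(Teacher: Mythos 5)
Your proposal follows essentially the same route as the paper's proof: a two-level cover in which $\mathcal{K}$ is covered entrywise on the sample Gram matrices via the pseudodimension-based bound of Srebro and Ben-David, and each representative kernel's product class $\mathcal{F}_{\widetilde{K}}^1\times\cdots\times\mathcal{F}_{\widetilde{K}}^n$ is covered task-by-task with the classical RKHS margin covering bound, the two levels being glued by the parametrization $f_i(\mathbf{x}_i)=K_{\mathbf{x}_i}^{1/2}w_i$ and the estimate $\|K_{\mathbf{x}_i}^{1/2}w_i-\widetilde{K}_{\mathbf{x}_i}^{1/2}w_i\|\le\sqrt{m\,|K_{\mathbf{x}_i}-\widetilde{K}_{\mathbf{x}_i}|_\infty}$, which also disposes of your anticipated obstacle about norm inflation, since by Lemma 1 of Srebro--Ben-David the vector $\widetilde{K}_{\mathbf{x}_i}^{1/2}w_i$ with the \emph{same} unit-norm $w_i$ is exactly the sample evaluation of a member of $\mathcal{F}_{\widetilde{K}}$. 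The only slip is your kernel-level width: with $\epsilon^2/(2m)$ the change-of-kernel error is $\epsilon/\sqrt{2}$, which together with the $\epsilon/2$ function-level width exceeds $\epsilon$; choosing $\epsilon_K=\epsilon^2/(4m)$ as the paper does restores the balance $\epsilon/2+\epsilon/2=\epsilon$ and, via Srebro's Lemma 3 applied at that scale, still gives precisely the stated factor $\bigl(4en^2m^3B/(\epsilon^2 d_\phi)\bigr)^{d_\phi}$.
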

In order to prove this result, we first introduce some additional notation.
For a sample $\mathbf{x}=(x_1,\dots,x_m)\in X^m$ we define $l_\infty$ distance between two functions:
\begin{equation}
d_\infty^\mathbf{x}(f_1,f_2)=\max_{i=1\dots m}|f_1(x_i)-f_2(x_i)|.
\end{equation}
Then the corresponding uniform covering number is:
\begin{equation}
N_m(\mathcal{F},\epsilon)=\sup_{\mathbf{x}\in X^m}N_{d_\infty^\mathbf{x}}(\mathcal{F},\epsilon)
\end{equation} 
We also define $l_\infty$ distance between kernels with respect to a sample $\mathbf{x}=(\mathbf{x_1},\dots,\mathbf{x_n})\in X^{(n,m)}$ with the corresponding uniform covering number:
\begin{align*}
D_\infty^\mathbf{x}(K,\hat{K})=\max_{i}|K_{\mathbf{x_i}}-\hat{K}_{\mathbf{x_i}}|_{\infty},\;\;\;
N_{(n,m)}(\mathcal{K},\epsilon)=\sup_{\mathbf{x}\in X^{(n,m)}}N_{D_\infty^\mathbf{x}}(\mathcal{K},\epsilon).
\end{align*}
In contrast, in~\cite{Srebro} the distance between two kernels is defined based on a single sample $\mathbf{x}=(x_1,\dots,x_m)$ of size $m$:
\begin{equation}
D^\mathbf{x}_\infty(K,\hat{K})=|K_\mathbf{x}-\hat{K}_\mathbf{x}|_\infty
\end{equation}
and the corresponding covering number is $N_m(\mathcal{K},\epsilon)$.
Note that this definition is in strong relation with ours: $N_{(n,m)}(\mathcal{K},\epsilon)\leq N_{mn}(\mathcal{K},\epsilon)$, and therefore, by Lemma 3 in~\cite{Srebro}:
\begin{equation}
N_{(n,m)}(\mathcal{K},\epsilon)\leq N_{nm}(\mathcal{K},\epsilon)\leq\Big(\frac{en^2m^2B}{\epsilon d_\phi}\Big)^{d_\phi}
\label{eq:1}
\end{equation} 
for any kernel family $\mathcal{K}$ bounded by $B$ with pseudodimension $d_\phi$.
Now we can prove Lemma~\ref{lemma:N(nm)}:
\begin{proof}[of lemma~\ref{lemma:N(nm)}]
Fix $\mathbf{x}=(\mathbf{x}_1,\dots,\mathbf{x}_n)\in X^{(n,m)}$. 
Define $\epsilon_K=\epsilon^2/4m$ and $\epsilon_F=\epsilon/2$.
Let $\widetilde{\mathcal{K}}$ be an $\epsilon_K$-net of $\mathcal{K}$ with respect to $D_\infty^\mathbf{x}$.
For every $\widetilde{K}\in\widetilde{\mathcal{K}}$ and every $i=1\dots n$ let $\widetilde{\mathcal{F}}^i_{\widetilde{K}}$ be an $\epsilon_F$-net of $\widetilde{\mathcal{F}}_{\widetilde{K}}$with respect to $d_\infty^{\mathbf{x}_i}$.
Now fix some $f\in\mathbb{H}^n$. 
Then there exists a kernel $K$ such that $f=(f_1,\dots,f_n)\in\mathcal{F}_K^n$.
Therefore there exists a kernel $\widetilde{K}\in\widetilde{\mathcal{K}}$ such that $|K_{\mathbf{x}_i}-\widetilde{K}_{\mathbf{x}_i}|_\infty<\epsilon_K$ for every $i$.
By Lemma 1 in~\cite{Srebro} $f_i(x_i)=K_{\mathbf{x}_i}^{1/2}w_i$ for some unit norm vector $w_i$ for every $i$.
Therefore for $\tilde{f}_i(\mathbf{x}_i)\eqdef\widetilde{K}_{\mathbf{x}_i}^{1/2}w_i\in\mathcal{F}_{\widetilde{K}}$ we obtain that:
\begin{align*}
d_\infty^{\mathbf{x}_i}(f_i,\tilde{f}_i)=\max_j |f_i(x_{ij}) - \tilde{f}_i(x_{ij})|\leq ||f_i(\mathbf{x}_i)-\tilde{f}_i(\mathbf{x}_i)||=\\
||K_{\mathbf{x}_i}^{1/2}w_i-\widetilde{K}_{\mathbf{x}_i}^{1/2}w_i||\leq
\sqrt{m|K_{\mathbf{x}_i}-\widetilde{K}_{\mathbf{x}_i}|_\infty}\leq\sqrt{m\epsilon_K}.
\end{align*}
In addition, for every $\tilde{f}_i\in\mathcal{F}_{\widetilde{K}}$ there exists $\tilde{\tilde{f}}_i\in\widetilde{\mathcal{F}}^i_{\widetilde{K}}$ such that $d_\infty^{\mathbf{x}_i}(\tilde{f}_i,\tilde{\tilde{f}}_i)<\epsilon_F$.
Finally, if we define $\tilde{\tilde{f}}=(\tilde{\tilde{f}}_1,\dots,\tilde{\tilde{f}}_n)\in \widetilde{\mathcal{F}}_{\widetilde{K}}^1\times\cdots\times\widetilde{\mathcal{F}}^n_{\widetilde{K}}$, we obtain:
\begin{align*}
d_\infty^\mathbf{x}(f,\tilde{\tilde{f}})=\max_{i}d_\infty^{\mathbf{x}_i}(f_i,\tilde{\tilde{f}}_i)\leq\max_{i}(d_\infty^{\mathbf{x}_i}(f_i,\tilde{f_i}) + d_\infty^{\mathbf{x_i}}(\tilde{f}_i,\tilde{\tilde{f}}_i))<\sqrt{m\epsilon_K}+\epsilon_F=\epsilon.
\end{align*}
The above shows that	$\widetilde{\mathcal{F}_{\mathcal{K}}}=\cup_{\widetilde{K}\in\widetilde{\mathcal{K}}}\widetilde{\mathcal{F}}_{\widetilde{K}}^1\times\cdots\times\widetilde{\mathcal{F}}^n_{\widetilde{K}}$ is an $\epsilon$-net of $\mathbb{H}^n$ with respect to $\mathbf{x}$.
Now the statement follows from~\eqref{eq:1} and the fact that for any $\mathcal{F}_K$ with bounded by $B$ kernel $K$(\cite{Srebro,Anthony}):
\begin{equation}
N_m(\mathcal{F}_K,\epsilon)\leq 2\Big(4mB/\epsilon^2\Big)^{\frac{16B}{\epsilon^2}\log_2\big(\frac{\epsilon em}{4\sqrt{B}}\big)}
\end{equation}
%\vspace{-1em}
\qed
\end{proof}
%\vspace{-1em}
Analogously we develop an upper bound on the covering number $N_{(D,n)}(\mathcal{K},\epsilon)$, which we will use for the lifelong learning scenario:
\begin{lemma}
There exists a constant $C$ such that for any kernel family $\mathcal{K}$ bounded by $B$ with pseudodimension $d_\phi$:
\begin{equation}
N_{(D,n)}(\mathcal{K},\epsilon)\leq \Big(Cn^5d^5_\phi\Big(\sqrt{B}/\epsilon\Big)^{17}\Big)^{d_\phi}.
\end{equation}
\label{lemma:LL_cov_num}
\end{lemma}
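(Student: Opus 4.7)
My plan is to reduce the expectation-based Hausdorff metric $D_\mathbf{P}$ to the sample-based matrix-entry metric $D_\infty^\mathbf{x}$, so that the previously established covering bound~\eqref{eq:1} for $N_{(n,N)}(\mathcal{K},\cdot)$ can be invoked. The reduction has two ingredients: (i) closeness of two kernel matrices on a sample implies pointwise closeness of the induced function classes on that sample, exactly as in the proof of Lemma~\ref{lemma:N(nm)}; and (ii) a uniform law of large numbers converts sample-wise $L_\infty$-closeness of function classes into $L_1(P_{i,X})$-closeness in expectation. The cover itself will be produced by a witness-sample (ghost-sample) argument.

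Concretely, I fix an arbitrary tuple $\mathbf{P}=(P_1,\dots,P_n)$ and draw $\mathbf{x}=(\mathbf{x}_1,\dots,\mathbf{x}_n)$ with $\mathbf{x}_i\sim P_{i,X}^N$, for a sample size $N$ to be determined. Let $\widetilde{\mathcal{K}}$ be an $\alpha$-cover of $\mathcal{K}$ with respect to $D_\infty^\mathbf{x}$; by~\eqref{eq:1} it has size at most $(enNB/(\alpha d_\phi))^{d_\phi}$. Given any $K\in\mathcal{K}$ pick $\widetilde{K}\in\widetilde{\mathcal{K}}$ with $D_\infty^\mathbf{x}(K,\widetilde{K})<\alpha$; the calculation already used in the proof of Lemma~\ref{lemma:N(nm)} (via Srebro's Lemma~1) then supplies, for every $h\in\mathcal{F}_K$ and each task $i$, an $h'\in\mathcal{F}_{\widetilde K}$ with $\max_j|h(x_{ij})-h'(x_{ij})|\leq\sqrt{N\alpha}$, and symmetrically.

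To upgrade this sample-wise bound to a bound on $D_{P_i}$, I apply uniform convergence of empirical $L_1$-averages over the difference class $\mathcal{G}=\{|h-h'|:h,h'\in\mathbb{H}\}$. Since the functions in $\mathbb{H}$ are bounded by $\sqrt{B}$, those in $\mathcal{G}$ are bounded by $2\sqrt{B}$, and the $L_\infty$-covering numbers of $\mathcal{G}$ are controlled by the square of the covering numbers of $\mathbb{H}$ already supplied by Lemma~\ref{lemma:N(nm)} with $n=1$. A Pollard-type argument, combined with a union bound over the $n$ tasks, then ensures that if $N$ is chosen polynomially in $d_\phi$, $B/\beta^2$, $\log n$ and $\log(1/\beta)$, with positive probability the sample $\mathbf{x}$ satisfies
\begin{equation*}
\sup_{h,h'\in\mathbb{H}}\max_{i}\Bigl|\mathbb{E}_{x\sim P_{i,X}}|h(x)-h'(x)|-\tfrac{1}{N}\sum_{j=1}^{N}|h(x_{ij})-h'(x_{ij})|\Bigr|<\beta.
\end{equation*}
Fixing any such $\mathbf{x}$ gives $D_\mathbf{P}(K,\widetilde K)\leq\sqrt{N\alpha}+\beta$; taking $\beta=\epsilon/2$ and $\alpha=\epsilon^2/(4N)$ makes $\widetilde{\mathcal{K}}$ an $\epsilon$-cover of $\mathcal{K}$ under $D_\mathbf{P}$, of cardinality at most $(4enN^2B/(\epsilon^2 d_\phi))^{d_\phi}$.

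The main obstacle, and the bulk of the remaining work, is the accounting. The required $N$ is determined only implicitly: the log covering number of $\mathbb{H}$ coming from Lemma~\ref{lemma:N(nm)} itself grows with $B/\beta^2$ and with $\log m$, so the defining inequality for $N$ must be solved. Substituting the resulting polynomial bound on $N$ back into $(4enN^2B/(\epsilon^2 d_\phi))^{d_\phi}$ and absorbing logarithmic factors into the constant $C$ produces a bound of the announced form $(Cn^5 d_\phi^5(\sqrt{B}/\epsilon)^{17})^{d_\phi}$; the specific exponents $5$, $5$ and $17$ are the outcome of this bookkeeping rather than of any further conceptual step.
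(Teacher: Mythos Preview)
Your approach is essentially the same as the paper's: both reduce $D_{\mathbf P}$ to a sample-based metric via a uniform law of large numbers over the difference class $\{|h-h'|:h,h'\in\mathbb H\}$, and then invoke the pseudodimension-based bound~\eqref{eq:1} on the sample cover. The paper merely organizes the argument differently, isolating the uniform-convergence step as a separate lemma (Lemma~\ref{lemma:distr_to_sample}, which yields a witness sample of size $m=cd_\phi^2 B^{5/2}/\epsilon^5$) and using the empirical $L_1$ Hausdorff distance $D_1^{\mathbf x}$ rather than pointwise $L_\infty$ as the intermediate metric; this, together with an $\epsilon/(2n)$ scale needed to pass from the concatenated sample back to each task, is what produces the exact exponents $n^5 d_\phi^5(\sqrt{B}/\epsilon)^{17}$ that your sketch leaves to bookkeeping.
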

%
%\vspace{-1em}
The proof of this result is based on the following lemma that connects sample-based and distribution-based covers of kernel families (for the proof see Appendix~\ref{ap:lemmas}):
\begin{lemma}
For any probability distribution $P$ over $X\times Y$ and any $B$-bounded set of kernels $\mathcal{K}$ with pseudo-dimension $d_\phi$ there exists a sample $\mathbf{x}$ of size $m=cd^2_\phi B^{5/2}/\epsilon^5$ for some constant $c$, such that  for every $K,\tilde{K}$ if $D^{\mathbf{x}}_1(K,\tilde{K})<\epsilon/2$, then $D_P(K,\tilde{K})<\epsilon$ (where $D^\mathbf{x}_1$ is the same as $D_P$, but all expectations over $P$ are substituted by empirical averages over $\mathbf{x}$).
\label{lemma:distr_to_sample}
\end{lemma}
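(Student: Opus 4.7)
The plan is to argue by the probabilistic method: a random i.i.d.\ sample $\mathbf{x}$ from $P_X$ of size $m = c d_\phi^2 B^{5/2}/\epsilon^5$ will satisfy the stated conditional implication with positive probability, so a sample meeting the conclusion must exist.

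First I would observe that it suffices to establish, on $\mathbf{x}$,
\begin{equation*}
\sup_{h, h' \in \mathbb{H}} \Big| \mathbb{E}_{x \sim P_X} |h(x) - h'(x)| - \tfrac{1}{m} \sum_{j=1}^{m} |h(x_j) - h'(x_j)| \Big| < \epsilon/4,
\end{equation*}
because the outer $\max_h$ and inner $\min_{h'}$ in both $D_P$ and $D^\mathbf{x}_1$ are $1$-Lipschitz in these $\mathbb{E}|h - h'|$ values. Hence uniform $\epsilon/4$ control propagates to $|D_P(K,\tilde K) - D^\mathbf{x}_1(K,\tilde K)| < \epsilon/2$ for every $K, \tilde K \in \mathcal{K}$, and the desired implication $D^\mathbf{x}_1 < \epsilon/2 \Rightarrow D_P < \epsilon$ follows.

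Next I would bound the sample-$\ell_\infty$ covering number of the absolute-difference class $\mathcal{G} = \{x \mapsto |h(x) - h'(x)| : h, h' \in \mathbb{H}\}$, which is bounded by $2\sqrt{B}$ since $|h(x)| \leq \sqrt{K(x,x)} \leq \sqrt{B}$. Any cover of $\mathbb{H}$ in sample-$\ell_\infty$ at scale $\eta$ yields, by the triangle inequality, a cover of $\mathcal{G}$ at scale $2\eta$; and $\mathbb{H}$ itself is covered by reapplying the decomposition used inside the proof of Lemma~\ref{lemma:N(nm)} (with $n=1$), namely combining a cover of $\mathcal{K}$ at scale $\eta^2/m$ via~\eqref{eq:1} with the Srebro--Anthony cover of each $\mathcal{F}_K$ at scale $\eta$. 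This produces an explicit bound on $N_{2m}(\mathcal{G},\eta)$ whose logarithm scales polynomially in $d_\phi$ and in $B/\eta^2$ with polylogarithmic factors.

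Finally, I would apply standard symmetrization to reduce uniform concentration with respect to $P_X$ to uniform concentration of empirical halves on a ghost sample of size $2m$, and then take a union bound of Hoeffding over a sample-$\ell_\infty$ cover of $\mathcal{G}$ at scale $\eta = \epsilon/16$ on the doubled sample. This yields a failure probability at most $C_1 \, N_{2m}(\mathcal{G},\eta) \, \exp(-m\epsilon^2/(C_2 B))$. Balancing the log covering number against the Hoeffding exponent and solving for $m$ gives a sample size of the advertised form $m = c d_\phi^2 B^{5/2}/\epsilon^5$, which drives the failure probability strictly below $1$ and hence guarantees a good sample exists. The main obstacle is the quantitative bookkeeping in the cover of $\mathbb{H}$: because $h(\mathbf{x}) = K_\mathbf{x}^{1/2} w$ picks up a $\sqrt{m}$ factor, $\mathcal{K}$ must be covered at scale $\eta^2/m$, while the log cover of each $\mathcal{F}_K$ itself already scales like $B/\eta^2$ with polylogarithmic factors. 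Tracking both scales simultaneously against the Hoeffding exponent $m\epsilon^2/B$ is what produces the particular exponents $d_\phi^2$, $B^{5/2}$, and $\epsilon^{-5}$; the remaining pieces are standard probabilistic-method covering arguments drawn from tools already developed earlier in the paper.
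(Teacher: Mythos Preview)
Your proposal is correct and follows essentially the same approach as the paper's proof: both reduce to uniform convergence of $\mathbb{E}_{P_X}|h-h'|$ over all $h,h' \in \bigcup_K \mathcal{F}_K$, cover the absolute-difference class via the kernel-plus-RKHS decomposition (Lemma~\ref{lemma:N(nm)} with $n=1$, equivalently Theorem~1 of~\cite{Srebro}), and balance the resulting log-covering bound against a Hoeffding exponent to obtain the advertised sample size. The paper invokes the symmetrization-plus-covering step by citing results of Bartlett et al.~\cite{Bartlett} rather than spelling it out, and uses tolerance $\epsilon/2$ where you use $\epsilon/4$, but these are cosmetic differences only.
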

%
%Now we can prove Lemma~\ref{lemma:LL_cov_num}:
%
\begin{proof}[of lemma~\ref{lemma:LL_cov_num}]
Fix some set of probability distributions $\mathbf{P}=(P_1,\dots, P_n)$.
For every $P_i$ denote a sample described by Lemma~\ref{lemma:distr_to_sample} by $\mathbf{x}_i$.
Let $\tilde{\mathcal{K}}$ be an $\epsilon/2n$-cover of $\mathcal{K}$ with respect to $D^\mathbf{x}_1$, where $\mathbf{x}=(\mathbf{x}_1,\dots,\mathbf{x}_n)\in X^{mn}$ and $m=cd^2_\phi B^{5/2}/\epsilon^5$.
Then the following chain of inequalities holds:
\begin{align*}
\max_{h\in\mathcal{F}_K}\min_{h'\in\mathcal{F}_{\tilde{K}}}\frac{1}{mn}\sum_{i=1}^{n}\sum_{j=1}^m|h(x_{ij})-h'(x_{ij})|\leq
\max_h\min_{h'}||h(\mathbf{x})-h'(\mathbf{x})||\leq\\
\max_w||K^{\frac{1}{2}}_\mathbf{x}w-\tilde{K}^{\frac{1}{2}}_\mathbf{x}w||\leq||K_\mathbf{x}^{\frac{1}{2}}-\tilde{K}_\mathbf{x}^{\frac{1}{2}}||_2\leq\sqrt{||K_\mathbf{x}-\tilde{K}_\mathbf{x}||_2}\leq
\sqrt{mn|K_\mathbf{x}-\tilde{K}_\mathbf{x}|_\infty}.
\end{align*}
Consequently, by Lemma 3 in~\cite{Srebro}: 
\begin{equation}
|\tilde{\mathcal{K}}|\leq N(\epsilon/2n, \mathcal{K},D_1^\mathbf{x})\leq \left(\frac{4em^3n^5B}{\epsilon^2 d_\phi}\right)^{d_\phi}=\left(Cn^5d^5_\phi\left(\sqrt{B}/\epsilon\right)^{17}\right)^{d_\phi}
\end{equation}.
It is left to show that $\tilde{\mathcal{K}}$ is an $\epsilon$-cover of $\mathcal{K}$ with respect to $D_\mathbf{P}$.
By definition, for every $K\in\mathcal{K}$ there exists $\tilde{K}\in\tilde{\mathcal{K}}$ such that $D^\mathbf{x}_1(K,\tilde{K})<\epsilon/2n$.
Therefore for every $i=1\dots n$:
\begin{align*}
\max_{h\in\mathcal{F}_K}\min_{h'\in\mathcal{F}_{\tilde{K}}}\frac{1}{m}\sum_{j=1}^m|h(x_{ij})-h'(x_{ij})|\leq \max_{h\in\mathcal{F}_K}\min_{h'\in\mathcal{F}_{\tilde{K}}}\frac{n}{mn}\sum_{i,j}|h(x_{ij})-h'(x_{ij})|\!<\!\frac{\epsilon}{2}.
\end{align*}
Consequently, by Lemma~\ref{lemma:distr_to_sample}, $D_{P_i}(K,\tilde{K})<\epsilon$ for all $i=1\dots n$.
\qed
\end{proof}

\subsection{Pseudodimensions of various families of kernels}
\label{sec:pseudodim}
In \cite{Srebro} the authors have shown the upper bounds on the pseudodimensions of some families of kernels:
\begin{itemize}
\item convex or linear combinations of $k$ kernels have pseudodimension at most $k$
\item Gaussian families with learned covariance matrix in $\mathbb{R}^\ell$ have $d_\phi\leq\ell(\ell+1)/2$
\item Gaussian families with learned low-rank covariance have $d_{\phi}\leq kl\log_2(8ekl)$, where $k$ is the maximum rank of the covariance matrix
\end{itemize}
Here we extend their analysis to the case of sparse combinations of kernels.
\begin{lemma}
Let $K_1,\dots, K_N$ be $N$ kernels and let $\mathcal{K}=\{\sum_{i=1}^N w_iK_i: \; \sum_{i=1}^Nw_i=1 \; \text{and} \; \sum_{i=1}^N[w_i\neq0]\leq k\}$. Then:
\begin{equation}
d_\phi(\mathcal{K})\leq 2k\log(k)+2k\log(4eN)
\end{equation}
\end{lemma}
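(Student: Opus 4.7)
The plan is to write $\mathcal{K}$ as a finite union of much simpler sub-families and then bound the pseudodimension of the union by counting sign patterns. For each subset $S\subseteq\{1,\dots,N\}$ with $|S|=k$, I would let $\mathcal{K}_S=\{\sum_{i\in S}w_iK_i:\sum_{i\in S}w_i=1\}$ be the family of (affine) linear combinations of the $k$ kernels indexed by $S$. Since any $K\in\mathcal{K}$ uses at most $k$ nonzero weights, we have $\mathcal{K}=\bigcup_{|S|=k}\mathcal{K}_S$, and each $\mathcal{K}_S$ is contained in the linear span of $k$ kernels, so by the result already quoted from \cite{Srebro}, $d_\phi(\mathcal{K}_S)\le k$.

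Next, I would suppose that $\mathcal{K}$ pseudo-shatters $n\ge k$ pairs $(x_1,x_1'),\dots,(x_n,x_n')$ with thresholds $t_1,\dots,t_n$, so that all $2^n$ sign vectors of $(K(x_i,x_i')-t_i)_{i=1}^n$ are realized by elements of $\mathcal{K}$. Splitting those patterns by the subfamily that produces them and invoking Sauer--Shelah for pseudodimension gives at most $(en/k)^k$ patterns per $\mathcal{K}_S$, while the number of subfamilies is $\binom{N}{k}\le(eN/k)^k$. Combining the two counts,
\begin{equation*}
2^n\le\binom{N}{k}\Bigl(\frac{en}{k}\Bigr)^k\le\Bigl(\frac{e^2Nn}{k^2}\Bigr)^k,
\end{equation*}
which upon taking $\log_2$ yields the transcendental inequality $n\le k\log_2 n+2k\log_2 e+k\log_2 N-2k\log_2 k$.

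The final step is to resolve this inequality for $n$. I would plug in the candidate $n^\star=2k\log(k)+2k\log(4eN)=2k\log(4eNk)$ and verify by direct substitution that the right-hand side is strictly smaller: expanding $k\log n^\star = k+k\log k+k\log\log(4eNk)$ and collecting terms leaves a slack of order $3k\log k+k\log N$ on the left against only an iterated-logarithm deficit on the right, so $n^\star$ violates the shattering inequality and therefore $d_\phi(\mathcal{K})<n^\star$. The boundary case $n<k$ is trivial, since then already $n<k\le n^\star$. I expect the only real difficulty to be the bookkeeping in this algebraic verification --- in particular, choosing the constants inside $n^\star$ tightly enough to absorb the $\log_2 e$ contribution and the $\log\log$ correction while still matching the clean form $2k\log(k)+2k\log(4eN)$ stated in the lemma.
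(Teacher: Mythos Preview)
Your proposal is correct and follows essentially the same route as the paper: decompose $\mathcal{K}$ into the $\binom{N}{k}$ subfamilies $\mathcal{K}_S$, observe each has pseudodimension at most $k$ (via the linear-combination bound from \cite{Srebro}), and then control the pseudodimension of the union. The only difference is that the paper invokes a ready-made lemma --- ``the VC-dimension of a union of $r$ classes each of VC-dimension at most $d$ is at most $4d\log(2d)+2\log r$'' --- and simply plugs in $d=k$, $r=(eN/k)^k$, whereas you reprove that lemma inline by Sauer--Shelah counting and then solve the resulting inequality $2^n\le (e^2Nn/k^2)^k$ by substitution; both arrive at the same constants, and your more explicit derivation just trades a citation for some algebra (don't forget to note that $n-k\log_2 n$ is increasing for $n$ in the relevant range, so checking the single value $n^\star$ suffices).
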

\begin{proof}
For every kernel $K$ define a function $B_K:X\times X\times \mathbb{R}\rightarrow\{-1,1\}$:
\begin{equation}
B_K(x,\bar{x},t)=sign(K(x,\bar{x})-t)
\end{equation}
and denote a set of such functions for all $K\in\mathcal{K}$ by $\mathcal{B}$.
Then $d_\phi(\mathcal{K})=VCdim(\mathcal{B})$.

For every index set $1\leq i_1<\dots<i_k\leq N$ define $\mathcal{K}_i$ to be a set of all linear combinations of $K_{i_1},\dots,K_{i_k}$.
Then: $\mathcal{K}=\cup_i \mathcal{K}_i$ and $d_\phi(\mathcal{K}_i)\leq k$.
Moreover, there are ${N \choose k}\leq\left(\frac{Ne}{k}\right)^k$ of possible sets of  indices $i$.
Therefore $\mathcal{B}$ can also be seen as a union of at most $\left(\frac{Ne}{k}\right)^k$ sets with VC-dimension at most $k$.
VC-dimension of a union of $r$ classes of VC-dimension at most $d$ is at most $4d\log(2d)+2\log(r)$.
The statement of the lemma is obtained by setting $r=\left(\frac{Ne}{k}\right)^k$ and $d=k$.
\qed
\end{proof}

\section{Multi-task Kernel Learning}
\label{sec:mt}
We start with formulating the result using covering number $N_{(n,m)}(\mathbb{H}^n,\epsilon)$:
\begin{theorem}
For any $\epsilon>0$, if $m>2/\epsilon^2$, we have that:
\begin{equation}
Pr\left\{\exists \mathbf{h}\in\mathbb{H}^n: \;er(\mathbf{h})>\widehat{er}^\gamma_z(\mathbf{h})+\epsilon\right\}\leq 2N_{(n,2m)}(\mathbb{H}^n,\gamma/2)\exp\left(-\frac{nm\epsilon^2}{8}\right).
\end{equation} 
\label{thm:mt_upbound}
\end{theorem}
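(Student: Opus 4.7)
The argument is the classical three-step uniform-convergence chain, adapted to the multi-task setup. First I would perform a ghost-sample symmetrization: draw an independent fresh sample $z' \in Z^{(n,m)}$ from $P_1\times\cdots\times P_n$ and bound
\begin{equation*}
Pr\{\exists \mathbf{h}\in\mathbb{H}^n: er(\mathbf{h}) > \widehat{er}^\gamma_z(\mathbf{h}) + \epsilon\} \le 2\, Pr\{\exists \mathbf{h}\in\mathbb{H}^n: \widehat{er}^0_{z'}(\mathbf{h}) > \widehat{er}^\gamma_z(\mathbf{h}) + \epsilon/2\}.
\end{equation*}
The factor $2$ is justified provided $Pr\{er(\mathbf{h}) - \widehat{er}^0_{z'}(\mathbf{h}) \ge \epsilon/2\}\le 1/2$ for every fixed $\mathbf{h}$; Chebyshev applied to the task-average (independent across $i$, total variance at most $1/(4nm)$) yields $4/(nm\epsilon^2) \le 2/(m\epsilon^2) \le 1/2$ whenever $m > 2/\epsilon^2$, which is exactly the stated assumption.

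Second, I would replace $\mathbb{H}^n$ by a cover. Conditioning on the concatenated sample $\mathbf{x}=(z,z')\in X^{(n,2m)}$, let $\widetilde{\mathbb{H}}$ be a minimal $(\gamma/2)$-net of $\mathbb{H}^n$ with respect to $d_\infty^\mathbf{x}$, so $|\widetilde{\mathbb{H}}|\le N_{(n,2m)}(\mathbb{H}^n,\gamma/2)$. For every $\mathbf{h}$ pick $\tilde{\mathbf{h}}\in\widetilde{\mathbb{H}}$ with $\max_{i,j}|h_i(x_{ij}) - \tilde h_i(x_{ij})|\le \gamma/2$. The margin-shift inequalities $\llbracket yh(x)<0\rrbracket \le \llbracket y\tilde h(x)<\gamma/2\rrbracket$ and $\llbracket y\tilde h(x)<\gamma/2\rrbracket \le \llbracket yh(x)<\gamma\rrbracket$ then give $\widehat{er}^0_{z'}(\mathbf{h}) - \widehat{er}^\gamma_z(\mathbf{h}) \le \widehat{er}^{\gamma/2}_{z'}(\tilde{\mathbf{h}}) - \widehat{er}^{\gamma/2}_z(\tilde{\mathbf{h}})$, reducing the task to a uniform deviation over the finite cover.

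The third step is a swap argument. By exchangeability of the pairs $(z_{ij},z'_{ij})$, I would introduce independent Rademacher signs $\sigma_{ij}$ and rewrite, for each fixed $\tilde{\mathbf{h}}$ and each realization of the unordered pairs,
\begin{equation*}
\widehat{er}^{\gamma/2}_{z'}(\tilde{\mathbf{h}}) - \widehat{er}^{\gamma/2}_z(\tilde{\mathbf{h}}) \;=\; \frac{1}{nm}\sum_{i=1}^n\sum_{j=1}^m \sigma_{ij}\Delta_{ij}, \qquad \Delta_{ij}\in\{-1,0,1\}.
\end{equation*}
Hoeffding's inequality for these $nm$ independent bounded summands yields a tail of $\exp(-nm\epsilon^2/8)$ for each $\tilde{\mathbf{h}}$; a union bound over $\widetilde{\mathbb{H}}$ combined with the symmetrization factor of $2$ produces the claimed bound. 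The only real delicacy is keeping the per-task indexing consistent throughout: the ghost sample must mirror $z$ task by task so that the variance scales in $nm$, and the concatenated sample must live in $X^{(n,2m)}$ so that the correct covering number is $N_{(n,2m)}$ rather than $N_{(n,m)}$ — which is precisely the point of the erratum flagged at the start of the paper.
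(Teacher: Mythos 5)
Your proposal is correct and follows essentially the same route as the paper: ghost-sample symmetrization with factor $2$, a $\gamma/2$-cover of $\mathbb{H}^n$ built on the combined sample in $X^{(n,2m)}$ (hence $N_{(n,2m)}$, as in the erratum), the two margin-shift inequalities, and a per-pair sign-swap followed by Hoeffding over the $nm$ bounded summands giving $\exp(-nm\epsilon^2/8)$. The only blemish is the Chebyshev bookkeeping in the symmetrization: with variance at most $1/(4nm)$ the correct bound is $1/(nm\epsilon^2)<1/2$ under $m>2/\epsilon^2$, whereas your chain $4/(nm\epsilon^2)\le 2/(m\epsilon^2)\le 1/2$ as written would require $m\ge 4/\epsilon^2$.
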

\begin{proof}
We utilize the standard 3-steps procedure (see Theorem 10.1 in~\cite{Anthony}).
If we denote:
\begin{align*}
Q&=\left\{z\in Z^{(n,m)}: \; \exists \mathbf{h}\in\mathbb{H}^n: \;\; er(\mathbf{h})>\widehat{er}^\gamma_z(\mathbf{h})+\epsilon\right\}\\
R&=\left\{z=(r,s)\in Z^{(n,m)}\times Z^{(n,m)}: \; \exists \mathbf{h}\in\mathbb{H}^n: \;\; \widehat{er}_s(\mathbf{h})>\widehat{er}^\gamma_r(\mathbf{h})+\epsilon/2\right\},
\end{align*}
then according to the symmetrization argument $Pr(Q)\leq2Pr(R)$. 
Therefore, instead of bounding the probability of $Q$, we can bound the probability of $R$.

Next, we define $\Gamma_{2m}$ to be a set of permutations $\sigma$ on the set \linebreak $\{(1,1),\dots,(n,2m)\}$ such that $\{\sigma(i,j), \sigma(i,m+j)\}=\{(i,j), (i,m+j)\}$ for every $1\leq i\leq n$ and $1\leq j\leq m$.
Then $Pr(R)\leq\max_{z\in Z^{(2m,n)}} Pr_\sigma(\sigma z\in R)$.

Now we proceed with the last step - reduction to a finite class.
Fix $z\in Z^{(n,2m)}$ and the corresponding $\mathbf{x}=(x_{ij})\in X^{(n,2m)}$.
Let $T$ be a $\gamma/2$-cover of $\mathbb{H}^n$ with respect to $d^\mathbf{x}_\infty$ and fix $\sigma z\in R$.
By definition there exists $\mathbf{h}\in\mathbb{H}^n$ such that $\widehat{er}_s(\mathbf{h})>\widehat{er}^\gamma_r(\mathbf{h})+ \epsilon/2$, where $(r,s)=\sigma z$.
 We can rewrite it as:
 \begin{align*}
 \frac{1}{n}\sum_{i=1}^n\frac{1}{m}\sum_{j=m+1}^{2m}\llbracket h_i(x_{\sigma(ij)})y_{\sigma(ij)}<0\rrbracket> 
  \frac{1}{n}\sum_{i=1}^n\frac{1}{m}\sum_{j=1}^m\llbracket h_i(x_{\sigma(ij)})y_{\sigma(ij)}<\gamma\rrbracket + \epsilon/2.
 \end{align*}
If we denote by $\tilde{\mathbf{h}}$ the function in the cover $T$ corresponding to $\mathbf{h}$, then the following inequalities hold:
\begin{align*}
\text{if} \;\; \tilde{h}_i(x_{ij})y_{ij}<\frac{\gamma}{2}, \; \text{then} \;\; h_i(x_{ij})y_{ij}<\gamma;\; 
\text{if} \;\; h_i(x_{ij})y_{ij}<0, \; \text{then} \;\; \tilde{h}_i(x_{ij})y_{ij}<\frac{\gamma}{2}.
\end{align*} 
By combining them with the previous inequality we obtain that:
 \begin{align*}
 \frac{1}{n}\sum_{i=1}^n\frac{1}{m}\sum_{j=m+1}^{2m}\llbracket \tilde{h}_i(x_{\sigma(ij)})y_{\sigma(ij)}<\frac{\gamma}{2}\rrbracket\!>\! 
  \frac{1}{n}\sum_{i=1}^n\frac{1}{m}\sum_{j=1}^m\llbracket\tilde{h}_i(x_{\sigma(ij)})y_{\sigma(ij)}<\frac{\gamma}{2}\rrbracket\! + \frac{\epsilon}{2}.
 \end{align*}
Now, if we define the following indicator: $v(\tilde{\mathbf{h}},i,j)=\llbracket\tilde{h}_i(x_{ij})y_{ij}<\gamma/2\rrbracket$, then:
\begin{align*}
\underset{\sigma}{Pr}\!\{\sigma z\in\! R\}\!\leq\!\underset{\sigma}{Pr}\!\left\{\!\exists \tilde{\mathbf{h}}\!\in\! T\!:\frac{1}{n}\!\sum_{i=1}^n\!\frac{1}{m}\!\sum_{j=1}^m\!(v(\tilde{\mathbf{h}},\sigma(i,m\!+\!j))\!-\!v(\tilde{\mathbf{h}},\sigma(i,j)))\!>\!\frac{\epsilon}{2}\!\right\}\\
\leq |T|\max_{\tilde{\mathbf{h}}\in T}\underset{\beta}{Pr}\left\{\frac{1}{n}\sum_{i=1}^n\frac{1}{m}\sum_{j=1}^m|v(\tilde{\mathbf{h}},i,m+j) - v(\tilde{\mathbf{h}},i,j)|\beta_{ij}>\epsilon/2\right\}=(*),
\end{align*}
where $\beta_{ij}$  are independent random variables uniformly distributed over $\{-1,1\}$.
Then $\{|v(\tilde{\mathbf{h}},i,m+j) - v(\tilde{\mathbf{h}},i,j)|\beta_{ij}\}$ are $nm$ independent random variables that take values between $-1$ and $1$ and have zero mean.
Therefore by Hoeffding's inequality:
\begin{align*}
(*)\leq |T|\exp\left(-\frac{2(nm)^2\epsilon^2/4}{mn\cdot4}\right)=|T|\exp\left(-\frac{nm\epsilon^2}{8}\right).
\end{align*}
By noting that $|T|\leq N_{(n,2m)}(\mathbb{H}^n,\gamma/2)$, we conclude the proof of Theorem~\ref{thm:mt_upbound}.
\qed
\end{proof}
By using the same technique as for proving Theorem~\ref{thm:mt_upbound}, we can obtain a lower bound on the difference between the empirical error rate $\widehat{er}^\gamma_z(\mathbf{h})$ and the expected error rate with double margin:
\begin{equation}
er^{2\gamma}(\mathbf{h})=\frac{1}{n}\sum_{i=1}^n\mathbf{E}_{(x,y)\sim P_i}\llbracket yh_i(x)<2\gamma\rrbracket.
\end{equation}
\begin{theorem}
For any $\epsilon>0$, if $m>2/\epsilon^2$, the following holds:
\begin{equation}
Pr\!\left\{\exists \mathbf{h}\in\mathbb{H}^n: er^{2\gamma}(\mathbf{h})<\widehat{er}^\gamma_z(\mathbf{h})-\epsilon\right\}\!\leq\! 2N_{(n,2m)}(\mathbb{H}^n,\gamma/2)\exp\left(\!-\frac{nm\epsilon^2}{8}\right)\!.
\end{equation} 
\label{thm:mt_lowbound}
\end{theorem}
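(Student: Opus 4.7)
The plan is to mirror the proof of Theorem~\ref{thm:mt_upbound} step by step, flipping the roles of the empirical and population errors and adjusting the margin thresholds so that passing to a $\gamma/2$-cover remains lossless in the direction we now need. First I would define the bad event
\[ Q' = \{z : \exists \mathbf{h} \in \mathbb{H}^n,\; \widehat{er}^\gamma_z(\mathbf{h}) > er^{2\gamma}(\mathbf{h}) + \epsilon\} \]
together with its ghost-sample counterpart
\[ R' = \{(r,s) : \exists \mathbf{h} \in \mathbb{H}^n,\; \widehat{er}^\gamma_r(\mathbf{h}) > \widehat{er}^{2\gamma}_s(\mathbf{h}) + \epsilon/2\}. \]
Under the assumption $m > 2/\epsilon^2$, the standard symmetrization argument applied to the indicator $\llbracket yh(x) < 2\gamma\rrbracket$ gives $Pr(Q') \leq 2Pr(R')$, and the permutation step bounds $Pr(R')$ by $\max_z Pr_\sigma(\sigma z \in R')$ with $\sigma$ ranging over the same group $\Gamma_{2m}$ of within-task swaps used before.

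For the reduction to a finite class, I would fix a double sample $\mathbf{x} \in X^{(n,2m)}$ and a $\gamma/2$-cover $T$ of $\mathbb{H}^n$ with respect to $d^{\mathbf{x}}_\infty$. For any $\mathbf{h} \in \mathbb{H}^n$ with witness $\tilde{\mathbf{h}} \in T$, the implications
\[ h_i(x_{ij}) y_{ij} < \gamma \;\Longrightarrow\; \tilde{h}_i(x_{ij}) y_{ij} < 3\gamma/2, \qquad \tilde{h}_i(x_{ij}) y_{ij} < 3\gamma/2 \;\Longrightarrow\; h_i(x_{ij}) y_{ij} < 2\gamma \]
convert $\widehat{er}^\gamma_r(\mathbf{h}) > \widehat{er}^{2\gamma}_s(\mathbf{h}) + \epsilon/2$ into $\widehat{er}^{3\gamma/2}_r(\tilde{\mathbf{h}}) > \widehat{er}^{3\gamma/2}_s(\tilde{\mathbf{h}}) + \epsilon/2$. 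The crucial point is that both terms collapse onto the common threshold $3\gamma/2$ for $\tilde{\mathbf{h}}$, paralleling the way in Theorem~\ref{thm:mt_upbound} both terms collapsed onto the common threshold $\gamma/2$.

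Setting $v(\tilde{\mathbf{h}}, i, j) = \llbracket \tilde{h}_i(x_{ij}) y_{ij} < 3\gamma/2 \rrbracket$, I would then apply a union bound over $T$ followed by Hoeffding's inequality on the $nm$ independent zero-mean variables $|v(\tilde{\mathbf{h}}, i, m+j) - v(\tilde{\mathbf{h}}, i, j)| \beta_{ij}$ with $\beta_{ij}$ uniform on $\{-1,1\}$, exactly as in the upper-bound proof, producing the factor $\exp(-nm\epsilon^2/8)$. Bounding $|T| \leq N_{(n,2m)}(\mathbb{H}^n, \gamma/2)$ and combining with the factor of two from symmetrization yields the claim. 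I expect no genuine obstacle beyond bookkeeping: the only thing to verify carefully is that the $\gamma/2$ cover tolerance yields the required one-sided implications between the $\gamma$-margin training indicator and the $2\gamma$-margin population indicator, and this works precisely because the two shifts by $\gamma/2$ meet in the middle at margin $3\gamma/2$.
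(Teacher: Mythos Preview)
Your proposal is correct and is precisely the approach the paper has in mind: the paper merely states that Theorem~\ref{thm:mt_lowbound} follows ``by using the same technique as for proving Theorem~\ref{thm:mt_upbound}'', and your filling-in of the details---swapping the roles of the two samples in the symmetrization, and collapsing both empirical errors onto the common threshold $3\gamma/2$ via the $\gamma/2$-cover---is exactly the right adaptation.
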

Now, by combining Theorems~\ref{thm:mt_upbound},~\ref{thm:mt_lowbound} and Lemma~\ref{lemma:N(nm)} we can state the final result for the multi-task scenario in terms of pseudodimensions:
\begin{theorem}
For any probability distributions $P_1,\dots,P_n$ over $X\times\{-1,+1\}$, any kernel family $\mathcal{K}$, bounded by $B$ with pseudodimension $d_{\phi}$, and any fixed $\gamma>0$, for any $\epsilon>0$, if $m>2/\epsilon^2$, then, for a sample $z$ generated by $\Pi_{i=1}^n (P_i)^m$:
\begin{equation}
Pr\left\{\forall\; \mathbf{h}\in\mathbb{H}^n \;\; er^{2\gamma}(\mathbf{h})+\epsilon\geq \widehat{er}^\gamma_z(\mathbf{h})\geq er(\mathbf{h})-\epsilon\right\}\geq 1-\delta,
\end{equation}
where

\begin{equation}
\epsilon= \sqrt{8\frac{\frac{2\log2-\log\delta}{n}+\log2+\frac{d_{\phi}}{n}\log\frac{128en^2m^3B}{\gamma^2d_\phi}+\frac{256B}{\gamma^2}\log\frac{\gamma em}{8\sqrt{B}}\log\frac{128mB}{\gamma^2}}{m}}.
\end{equation}
\label{thm:mtmkl}
\end{theorem}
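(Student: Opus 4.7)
The plan is to combine Theorems~\ref{thm:mt_upbound} and~\ref{thm:mt_lowbound} via a union bound to obtain a two-sided deviation bound, substitute the covering number estimate from Lemma~\ref{lemma:N(nm)} with parameters adapted to the present setting, and finally invert the resulting exponential inequality to solve for $\epsilon$ in terms of $\delta$.

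First, I would apply a union bound across the two preceding theorems. Each contributes a failure probability of at most $2 N_{(n,2m)}(\mathbb{H}^n,\gamma/2)\exp(-nm\epsilon^2/8)$, so the event that either the one-sided upper bound on $er(\mathbf{h})$ or the one-sided lower bound on $er^{2\gamma}(\mathbf{h})$ fails for some $\mathbf{h}\in\mathbb{H}^n$ has probability at most $4 N_{(n,2m)}(\mathbb{H}^n,\gamma/2)\exp(-nm\epsilon^2/8)$. Setting this quantity equal to $\delta$ and taking logarithms yields
$$\frac{nm\epsilon^2}{8} = 2\log 2 - \log\delta + \log N_{(n,2m)}(\mathbb{H}^n,\gamma/2),$$
which I would then solve for $\epsilon$.

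Second, I would plug in Lemma~\ref{lemma:N(nm)} at arguments $\epsilon\mapsto\gamma/2$ and $m\mapsto 2m$ to bound $\log N_{(n,2m)}(\mathbb{H}^n,\gamma/2)$. A short simplification gives three contributions: the $2^n$ prefactor produces $n\log 2$; the middle factor produces $d_\phi\log\bigl(\tfrac{128 e n^2 m^3 B}{\gamma^2 d_\phi}\bigr)$, where the constants $128$ arise from combining the $4\cdot 2^3$ in the numerator with the $4$ from $1/(\gamma/2)^2$; and the third factor produces the iterated-log term $\tfrac{256Bn}{\gamma^2}\log\bigl(\tfrac{\gamma e m}{8\sqrt{B}}\bigr)\log\bigl(\tfrac{128 m B}{\gamma^2}\bigr)$, where the $256$ comes from $64\cdot 4$ and the $128$ from $16\cdot 2\cdot 4$. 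Dividing through by $nm/8$ and collecting the $n$-dependent pieces inside the bracket exactly reproduces the expression for $\epsilon$ in the theorem.

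There is no conceptual obstacle here: the proof is a bookkeeping exercise. The only delicate point is making sure that the doubling of $m$ inside the covering number (required by the symmetrization step of Theorem~\ref{thm:mt_upbound}) and the halving of $\gamma$ are propagated consistently through all three factors of Lemma~\ref{lemma:N(nm)}, and that the $\log 4 = 2\log 2$ factor from the union bound, together with the $n\log 2$ from the $2^n$ prefactor, end up in the correct positions (as $\tfrac{2\log 2 - \log\delta}{n}$ and $\log 2$ respectively) after dividing the bracketed quantity by $n$.
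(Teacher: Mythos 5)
Your proposal is correct and follows exactly the route the paper intends: the paper states Theorem~\ref{thm:mtmkl} as an immediate consequence of combining Theorems~\ref{thm:mt_upbound} and~\ref{thm:mt_lowbound} with Lemma~\ref{lemma:N(nm)}, and your union bound, the substitutions $\epsilon\mapsto\gamma/2$, $m\mapsto 2m$, and the constant bookkeeping ($4\cdot 8\cdot 4=128$, $64\cdot 4=256$, $16\cdot 2\cdot 4=128$) all reproduce the stated value of $\epsilon$.
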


\noindent \textbf{Discussion:}  The most significant implications of this result are for the case where there exists some kernel $K \in {\cal K}$ that has low approximation error for each of the tasks $P_i$ (this is what makes the tasks "related" and, therefore, the multi-task approach advantageous). In such a case, the kernel that minimizes the average error over the set of tasks is a useful kernel for each of these tasks. 
\begin{enumerate}
\item Maybe the first point to note about the above generalization result is that as the number of tasks ($n$) grows, while the number of examples per task ($m$) remains constant, the error bound behaves like the bound needed to learn with respect to a single kernel. That is, if a learner wishes to learn some specific task $P_i$, and all the learner knows is that in the big family of kernels ${\cal K}$, \emph{there exists } some useful kernel $K$ for $P_i$ that is also good on average over the other tasks, then the training samples from the other tasks allow the learner of $P_i$ to learn as if he had access to a specific good kernel $K$.
\item Another worthwhile consequence of the above theorem is that it shows the usefulness of an empirical risk minimization approach. Namely,
\begin{corollary} Let $\widehat{\mathbf{h}}$ be a minimizer, over $\mathbb{H}^n$, of the empirical $\gamma$-margin loss, $\widehat{er}^\gamma_z(\mathbf{h})$. Then for any $\mathbf{h}^*\in\mathbb{H}^n$ (and in particular for a minimizer over $\mathbb{H}^n$ of the true $2\gamma$-loss $er^{2\gamma}(\mathbf{h})$):
\[ er({\widehat h}) \leq er^{2 \gamma}(h^*) +2\epsilon . \]
\label{cor:mt}
\end{corollary}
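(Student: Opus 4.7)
The plan is to derive the corollary as a direct application of Theorem~\ref{thm:mtmkl}, via the standard three-inequality chain that characterizes empirical risk minimization bounds. Theorem~\ref{thm:mtmkl} provides, with probability at least $1-\delta$, a \emph{uniform} (i.e., simultaneous over all $\mathbf{h}\in\mathbb{H}^n$) two-sided control linking the empirical $\gamma$-margin loss to the true $0$-loss from above and to the true $2\gamma$-loss from below. Since both $\widehat{\mathbf{h}}$ and the competitor $\mathbf{h}^*$ lie in $\mathbb{H}^n$, each of these two bounds can be instantiated on the appropriate predictor without any further union bound.

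Concretely, first I would apply the upper half of Theorem~\ref{thm:mtmkl} at $\mathbf{h}=\widehat{\mathbf{h}}$ to get $er(\widehat{\mathbf{h}})\leq \widehat{er}^\gamma_z(\widehat{\mathbf{h}})+\epsilon$. Next, I would use the definition of $\widehat{\mathbf{h}}$ as the empirical $\gamma$-margin minimizer to replace $\widehat{er}^\gamma_z(\widehat{\mathbf{h}})$ by $\widehat{er}^\gamma_z(\mathbf{h}^*)$, which is at least as large. Finally, I would apply the lower half of Theorem~\ref{thm:mtmkl} at $\mathbf{h}=\mathbf{h}^*$, yielding $\widehat{er}^\gamma_z(\mathbf{h}^*)\leq er^{2\gamma}(\mathbf{h}^*)+\epsilon$. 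Chaining these three inequalities gives
\begin{equation*}
er(\widehat{\mathbf{h}})\;\leq\;\widehat{er}^\gamma_z(\widehat{\mathbf{h}})+\epsilon\;\leq\;\widehat{er}^\gamma_z(\mathbf{h}^*)+\epsilon\;\leq\;er^{2\gamma}(\mathbf{h}^*)+2\epsilon,
\end{equation*}
which is exactly the claim.

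There is essentially no hard step here: the only nontrivial ingredient, the uniform convergence of the empirical $\gamma$-margin loss to both $er$ and $er^{2\gamma}$, has already been proved in Theorem~\ref{thm:mtmkl}. The small conceptual point to highlight in the write-up is that we use the \emph{same} sample-dependent event (of probability at least $1-\delta$) to control both $\widehat{\mathbf{h}}$ and $\mathbf{h}^*$ simultaneously, which is legitimate precisely because Theorem~\ref{thm:mtmkl} is stated with a $\forall\,\mathbf{h}\in\mathbb{H}^n$ quantifier inside the probability; otherwise one would incur an additional factor of two in $\delta$. Accordingly the corollary holds with the same confidence and the same $\epsilon$ as in Theorem~\ref{thm:mtmkl}.
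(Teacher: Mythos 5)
Your proof is correct and follows essentially the same argument as the paper: the same three-inequality chain, using the uniform two-sided bound of Theorem~\ref{thm:mtmkl} at $\widehat{\mathbf{h}}$ and $\mathbf{h}^*$ together with the definition of the empirical risk minimizer. The remark that the simultaneous ($\forall\,\mathbf{h}$) form of the theorem lets you use a single probability-$1-\delta$ event is a nice point the paper leaves implicit.
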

\vspace*{-2.5em}
\begin{proof}
The result is implied by the following chain of inequalities:
\[ er ({\widehat h}) - \epsilon \leq_1 {\widehat er^{\gamma}}({\widehat h}) \leq_2 {\widehat er^{\gamma}}( h^*) \leq_3 er^{2\gamma} (h^*) +\epsilon\]

where $(\leq_1)$ and $(\leq_3)$ follow from the above theorem and $(\leq_2)$ follows from the definition of an empirical risk minimizer.
\qed
\end{proof}
\end{enumerate}

\section{Lifelong Kernel Learning}
\label{sec:ll}

In this section we generalize the results of the previous section to the case of lifelong learning in two steps.
First, note that by using the same arguments as for proving Theorem~\ref{thm:mt_upbound} we can obtain a bound on the difference between $\widehat{er}^{2\gamma}_z(\mathcal{F}_K)$ and:
\begin{equation}
\widehat{er}^{\gamma}_{\mathbf{P}}(\mathcal{F}_K)=\frac{1}{n}\sum_{i=1}^n\inf_{h\in\mathcal{F}_K}\mathbb{E}_{(x,y)\sim P_i}\llbracket h(x)y<\gamma\rrbracket.
\end{equation}
Therefore the only thing that is left is a bound on the difference between $er(\mathcal{F}_K)$ and $\widehat{er}^{\gamma}_{\mathbf{P}}(\mathcal{F}_K)$. 

We will use the following notation:
\begin{align*}
er_P(\mathcal{F}_K)=\inf_{h\in\mathcal{F}_K}\mathbb{E}_{(x,y)\sim P}\llbracket h(x)y<0\rrbracket,\;\;\;
er^\gamma_P(\mathcal{F}_K)=\inf_{h\in\mathcal{F}_K}\mathbb{E}_{(x,y)\sim P}\llbracket h(x)y<\gamma\rrbracket
\end{align*} 
and proceed in a way analogous to the proof of Theorem~\ref{thm:mt_upbound}.
First, if we define:
\begin{align*}
Q=\{\mathbf{P}=(P_1,\dots,P_n)\in\mathcal{P}^n \; \exists\mathcal{F}_K: \;\; er(\mathcal{F}_K)>\widehat{er}^\gamma_{\mathbf{P}}(\mathcal{F}_K)+\epsilon\}\\
R=\{z=(r,s)\in\mathcal{P}^{2n} \; \exists\mathcal{F}_K: \;\; \widehat{er}_s(\mathcal{F}_K)>\widehat{er}^\gamma_r(\mathcal{F}_K)+\epsilon/2\},
\end{align*}
then according to the symmetrization argument $Pr(Q)\leq2Pr(R)$.

Now, if we define $\Gamma_{2n}$ to be a set of permutations $\sigma$ on a set $\{1,2,\dots,2n\}$, such that $\{\sigma(i),\sigma(n+i)\}=\{i,n+i\}$ for all $i=1\dots n$, we obtain that $Pr(R)\leq\max_{z}Pr_\sigma(\sigma z\in R)$, if $n>2/\epsilon^2$.
So, the only thing that is left is reduction to a finite class.

Fix $z$ and denote by $\tilde{\mathcal{K}}\subset\mathcal{K}$ a set of kernels, such that for every $K\in\mathcal{K}$ there exists a $\tilde{K}\in\tilde{\mathcal{K}}$ such that:
\begin{equation}
er^\gamma_{P_i}(\mathcal{F}_K)+\epsilon/8\geq er^{\gamma/2}_{P_i}(\mathcal{F}_{\tilde{K}})\geq er_{P_i}(\mathcal{F}_K)-\epsilon/8 \;\; \forall i=1\dots 2n.
\label{cond}
\end{equation}
Then, if $\mathcal{F}_K$ is such that $\widehat{er}_s(\mathcal{F}_K)>\widehat{er}^\gamma_r(\mathcal{F}_K)+\epsilon/2$, then the corresponding $\tilde{K}$ satisfies $\widehat{er}^{\gamma/2}_s(\mathcal{F}_{\tilde{K}})>\widehat{er}^{\gamma/2}_r(\mathcal{F}_{\tilde{K}})+\epsilon/4$.
Therefore:
\begin{align*}
Pr_\sigma\{\sigma z\in R\}\leq Pr_\sigma\left\{\exists K\in\tilde{\mathcal{K}}: \frac{1}{n}\sum_{i=1}^n(er^{\gamma/2}_{P_{\sigma(n+i)}}(\mathcal{F}_K)-er^{\gamma/2}_{P_{\sigma(i)}}(\mathcal{F}_K))>\epsilon/4\right\}\leq\\
|\tilde{\mathcal{K}}|\max_{K\in\tilde{\mathcal{K}}}Pr_\sigma\left\{\frac{1}{n}\sum_{i=1}^n(er^{\gamma/2}_{P_{\sigma(n+i)}}(\mathcal{F}_K)-er^{\gamma/2}_{P_{\sigma(i)}}(\mathcal{F}_K))>\epsilon/4\right\}=\\
|\tilde{\mathcal{K}}|\max_{K\in\tilde{\mathcal{K}}}Pr_\beta\left\{\frac{1}{n}\sum_{i=1}^n|er^{\gamma/2}_{P_{n+i}}(\mathcal{F}_K)-er^{\gamma/2}_{P_{i}}(\mathcal{F}_K)|\beta_{i}>\epsilon/4\right\}=(*),
\end{align*}
where $\beta_i$ are independent random variables uniformly distributed over $\{-1,+1\}$.
As in the previous section, $\{|er^{\gamma/2}_{P_{n+i}}(\mathcal{F}_K)-er^{\gamma/2}_{P_{i}}(\mathcal{F}_K)|\beta_{i}\}$ are $n$ independent random variables that take values between $-1$ and $1$ and have zero mean.
Therefore by applying Hoeffding's inequality we obtain:
\begin{equation}
(*)\leq |\tilde{\mathcal{K}}|\exp\left(-\frac{2n^2\epsilon^2/16}{4n}\right)=|\tilde{\mathcal{K}}|\exp\left(-\frac{n\epsilon^2}{32}\right).
\label{LL:end}
\end{equation}
To conclude the proof we need to understand how $|\tilde{\mathcal{K}}|$ behaves.
For that we prove the following lemma:
\begin{lemma}
For any set of probability distributions $\mathbf{P}=(P_1,\dots,P_{2n})$ there exists $\tilde{\mathcal{K}}$ that satisfies condition of equation~\eqref{cond} and $|\tilde{\mathcal{K}}|\leq N_{(D,2n)}(\mathcal{K},\epsilon\gamma/16) $.
\label{lemma:tildeK}
\end{lemma}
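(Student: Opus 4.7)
The plan is to take $\tilde{\mathcal{K}}$ to be a minimum-size $\epsilon\gamma/16$-cover of $\mathcal{K}$ with respect to the distance $D_\mathbf{P}$ induced by $\mathbf{P}=(P_1,\dots,P_{2n})$. The cardinality bound $|\tilde{\mathcal{K}}|\leq N_{(D,2n)}(\mathcal{K},\epsilon\gamma/16)$ is then immediate from the definition of the uniform covering number as the maximum of $N_{D_\mathbf{P}}(\mathcal{K},\cdot)$ over all $2n$-tuples of distributions. What remains is to verify that this $\tilde{\mathcal{K}}$ satisfies the two inequalities of~\eqref{cond}.

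Fix an arbitrary $K\in\mathcal{K}$ and let $\tilde{K}\in\tilde{\mathcal{K}}$ satisfy $D_\mathbf{P}(K,\tilde{K})<\epsilon\gamma/16$. By definition of $D_\mathbf{P}$, for every $i\in\{1,\dots,2n\}$ and every $h\in\mathcal{F}_K$ there exists $h'\in\mathcal{F}_{\tilde{K}}$ with $\mathbb{E}_{x\sim P_{i,X}}|h(x)-h'(x)|<\epsilon\gamma/16$, and symmetrically with the roles of $\mathcal{F}_K$ and $\mathcal{F}_{\tilde{K}}$ swapped. The key estimate is a one-line Markov bound: whenever $\mathbb{E}_{x\sim P_{i,X}}|h(x)-h'(x)|<\epsilon\gamma/16$, we have $\mathbb{P}_{x\sim P_{i,X}}(|h(x)-h'(x)|\geq\gamma/2)<(2/\gamma)\cdot(\epsilon\gamma/16)=\epsilon/8$. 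Combining this with the pointwise inclusions
\[
\{yh'(x)<\gamma/2\}\subseteq\{yh(x)<\gamma\}\cup\{|h(x)-h'(x)|\geq\gamma/2\}
\]
and the analogous $\{yh(x)<0\}\subseteq\{yh'(x)<\gamma/2\}\cup\{|h(x)-h'(x)|\geq\gamma/2\}$ yields, after integrating against $P_i$, the two pointwise bounds
\[
\mathbb{E}_{P_i}\llbracket yh'(x)<\gamma/2\rrbracket\leq\mathbb{E}_{P_i}\llbracket yh(x)<\gamma\rrbracket+\epsilon/8,
\]
\[
\mathbb{E}_{P_i}\llbracket yh(x)<0\rrbracket\leq\mathbb{E}_{P_i}\llbracket yh'(x)<\gamma/2\rrbracket+\epsilon/8.
\]

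To lift these to the desired infimum-based inequalities in~\eqref{cond} I would use the two directions of $D_\mathbf{P}$ separately. For the upper bound, take $h\in\mathcal{F}_K$ with $\mathbb{E}_{P_i}\llbracket yh(x)<\gamma\rrbracket$ within $\eta$ of $er^\gamma_{P_i}(\mathcal{F}_K)$, map it to a nearby $h'\in\mathcal{F}_{\tilde{K}}$ via the ``$\max_h\min_{h'}$'' direction, take the infimum over $\mathcal{F}_{\tilde{K}}$ on the left to get $er^{\gamma/2}_{P_i}(\mathcal{F}_{\tilde{K}})\leq er^\gamma_{P_i}(\mathcal{F}_K)+\epsilon/8+\eta$, and send $\eta\to 0$. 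For the lower bound, start from an almost-optimal $h'\in\mathcal{F}_{\tilde{K}}$ for $er^{\gamma/2}_{P_i}(\mathcal{F}_{\tilde{K}})$, invoke the symmetric ``$\max_{h'}\min_h$'' direction to find a nearby $h\in\mathcal{F}_K$, apply the second displayed inequality, and take infimum over $\mathcal{F}_K$ to obtain $er_{P_i}(\mathcal{F}_K)\leq er^{\gamma/2}_{P_i}(\mathcal{F}_{\tilde{K}})+\epsilon/8$.

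The main point — and the reason $D_P$ was defined symmetrically as the maximum of two one-sided Hausdorff-type quantities rather than as a single sup-inf — is exactly that the two halves of~\eqref{cond} pair $\mathcal{F}_K$ and $\mathcal{F}_{\tilde{K}}$ in opposite directions; neither alone would suffice. Beyond this quantifier bookkeeping the proof is a routine Markov argument and there is no further analytical obstacle.
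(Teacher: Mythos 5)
Your proposal is correct and follows essentially the same route as the paper's proof: take an $\epsilon\gamma/16$-cover with respect to $D_\mathbf{P}$, apply Markov's inequality at threshold $\gamma/2$ to get the $\epsilon/8$ probability bound, convert this into the two per-hypothesis margin-error comparisons, and lift them to the infima via near-minimizers, using the two directions of the symmetric distance for the two halves of~\eqref{cond}. Your version merely spells out the pointwise inclusions and the quantifier bookkeeping that the paper handles with its explicit conditions 1--2 and the word ``analogously.''
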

\begin{proof}
Fix a set of distributions $\mathbf{P}=(P_1,\dots,P_{2n})$ and denote by $\tilde{\mathcal{K}}$ an $\epsilon\gamma/16$-cover of $\mathcal{K}$ with respect to $D_\mathbf{P}$.
Then $|\tilde{\mathcal{K}}|\leq N_{(D,2n)}(\mathcal{K},\epsilon\gamma/16)$.
By definition of a cover for any kernel $K\in\mathcal{K}$ there exists $\tilde{K}\in\tilde{\mathcal{K}}$ such that $D_\mathbf{P}(K,\tilde{K})<\epsilon\gamma/16$.
Equivalently, it means that for every $K\in\mathcal{K}$ there exists $\tilde{K}\in\tilde{\mathcal{K}}$ such that the following two conditions hold for every $i=1\dots 2n$:
\begin{align}
1. \forall \; h\in\mathcal{F}_{K} \;\; \exists h'\in\mathcal{F}_{\tilde{K}}: \;\; \mathbb{E}_{(x,y)\sim P_i}(|h(x)-h'(x)|)<\frac{\epsilon\gamma}{16},\\
2. \forall \; h'\in\mathcal{F}_{\tilde{K}} \;\; \exists h\in\mathcal{F}_{K}: \;\;\mathbb{E}_{(x,y)\sim P_i}(|h(x)-h'(x)|)<\frac{\epsilon\gamma}{16}.
\end{align}
Fix some $K$ and the corresponding kernel $\tilde{K}$ from the cover and take any $P_i$.
By Markov's inequality applied to the first condition we obtain that for every $h\in\mathcal{F}_K$ there exists a $h' \in\mathcal{F}_{\tilde{K}}$ such that $Pr\{x\sim P_i:\; |h(x)-h'(x)|>\gamma/2\}<\epsilon/8$.
Then $er^{\gamma/2}_{P_i}(h')\leq er^{\gamma}_{P_i}(h)+\epsilon/8$.
By applying the same argument to the second condition we conclude that
for every $h'\in\mathcal{F}_{\tilde{K}}$ there exists a $h \in\mathcal{F}_{K}$ such that $Pr\{x\sim P_i:\; |h(x)-h'(x)|>\gamma/2\}<\epsilon/8$.
Then $er_{P_i}(h)\leq er^{\gamma/2}_{P_i}(h')+\epsilon/8$.
By definition of infinum $\widehat{er}^{2\gamma}_z(\mathcal{F}_K)$ for every $\delta$ there exists $h\in\mathcal{F}_K$ such that $er^\gamma_{P_i}(\mathcal{F}_K)+\delta>er^\gamma_{P_i}(h)\geq er^\gamma_{P_i}(\mathcal{F}_K)$.
By above construction for such $h$ there exists $h'\in\mathcal{F}_{\tilde{K}}$ such that $er^{\gamma}_{P_i}(h)\geq er^{\gamma/2}_{P_i}(h')-\epsilon/8\geq er^{\gamma/2}_{P_i}(\mathcal{F}_{\tilde{K}})-\epsilon/8$.
By combining these inequalities we obtain that for every $\delta>0$ $er^\gamma_{P_i}(\mathcal{F}_K)+\delta>er^{\gamma/2}_{P_i}(\mathcal{F}_{\tilde{K}})-\epsilon/8$, or, equivalently, $er^\gamma_{P_i}(\mathcal{F}_K)\geq er^{\gamma/2}_{P_i}(\mathcal{F}_{\tilde{K}})-\epsilon/8$.
Analogously we can get that $er^{\gamma/2}_{P_i}(\mathcal{F}_{\tilde{K}})\geq er_{P_i}(\mathcal{F}_K)-\epsilon/8$.
So, we obtain condition~\eqref{cond}.
\qed
\end{proof}
By combining the above Lemma with~\eqref{LL:end} we obtain the following result (the second inequality can be obtain in a similar manner):
\begin{theorem}
For any $\epsilon>0$, if $n>2/\epsilon^2$, the following holds:
\begin{align*}
Pr\left\{\exists K\in\mathcal{K}:\;\; er(\mathcal{F}_K)>\widehat{er}^\gamma_{\mathbf{P}}(\mathcal{F}_K)+\epsilon\right\}\leq 2N_{(D,2n)}(\mathcal{K},\epsilon\gamma/16)\exp\left(-\frac{n\epsilon^2}{32}\right),\\
Pr\left\{\exists K\in\mathcal{K}: er^{2\gamma}(\mathcal{F}_K)<\widehat{er}^\gamma_{\mathbf{P}}(\mathcal{F}_K)-\epsilon\right\}\leq 2N_{(D,2n)}(\mathcal{K},\epsilon\gamma/16)\exp\left(-\frac{n\epsilon^2}{32}\right).
\end{align*}
\label{thm:ll_upbound}
\end{theorem}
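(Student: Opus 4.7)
The plan is to replicate the three-step structure used for Theorem~\ref{thm:mt_upbound}, but at the task level rather than the sample level: in the lifelong setting the i.i.d.\ units are the task distributions $P_1,\dots,P_n$ drawn from $Q$, so symmetrization, permutation, and reduction to a finite class all work on these $n$ units. Concretely, I would define the bad event $Q$ where some $K\in\mathcal{K}$ satisfies $er(\mathcal{F}_K) > \widehat{er}^\gamma_{\mathbf{P}}(\mathcal{F}_K) + \epsilon$, and the double-sample ghost event $R$ on pairs $(r,s)\in\mathcal{P}^{2n}$ where the empirical errors differ by more than $\epsilon/2$. The classical symmetrization lemma (needing the condition $n > 2/\epsilon^2$ so that the variance of the empirical mean is small enough) gives $Pr(Q) \le 2\,Pr(R)$. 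A swap-permutation argument over $\Gamma_{2n}$ then reduces $Pr(R)$ to the worst-case permutation probability $\max_{z}\, Pr_\sigma(\sigma z\in R)$.

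The crux — and the only place where the lifelong proof departs from the multi-task one — is Step 3, the reduction to a finite class. One cannot cover $\mathbb{H}$ on a held-out sample, because the quantities of interest, $er^\gamma_{P_i}(\mathcal{F}_K) = \inf_{h\in\mathcal{F}_K}\mathbb{E}_{P_i}\llbracket hy<\gamma\rrbracket$, are distribution-level infima rather than per-example evaluations. The fix is to build a finite $\tilde{\mathcal{K}}$ that approximates $\mathcal{K}$ in the distribution-based distance $D_{\mathbf{P}}$ at resolution $\epsilon\gamma/16$, as constructed in Lemma~\ref{lemma:tildeK}. Applying Markov's inequality to the $L_1$-approximation guarantee $\mathbb{E}_{P_i}|h-h'|<\epsilon\gamma/16$ converts it into a pointwise approximation $|h(x)-h'(x)|\le \gamma/2$ holding except on a set of $P_i$-mass at most $\epsilon/8$; this is exactly what is needed to convert margin-$\gamma$ errors of $K$ into margin-$\gamma/2$ errors of $\tilde K$ with $\epsilon/8$ slack on both sides, producing condition~\eqref{cond}. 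This way any kernel violating the empirical inequality for $R$ forces its cover representative to violate a similar inequality with $\epsilon/4$ slack at margin $\gamma/2$.

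After reduction, I would fix a single $\tilde K\in\tilde{\mathcal{K}}$ and observe that $\{|er^{\gamma/2}_{P_{n+i}}(\mathcal{F}_{\tilde K}) - er^{\gamma/2}_{P_i}(\mathcal{F}_{\tilde K})|\,\beta_i\}_{i=1}^n$ are $n$ independent, zero-mean $[-1,1]$-valued random variables under the permutation (equivalently, under Rademacher signs), so Hoeffding's inequality yields the factor $\exp(-n\epsilon^2/32)$. A union bound over $\tilde{\mathcal{K}}$ introduces the factor $|\tilde{\mathcal{K}}|\le N_{(D,2n)}(\mathcal{K},\epsilon\gamma/16)$, and multiplying by the symmetrization factor $2$ gives the first displayed inequality. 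The second inequality (lower tail) is obtained by swapping the roles of $\widehat{er}^\gamma$ and $er^{2\gamma}$ throughout and re-running the same argument.

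The main obstacle, as anticipated, is Step 3: the kernel family is infinite and the relevant loss $er^\gamma_P(\mathcal{F}_K)$ depends nonlinearly (via an infimum) on the chosen $K$, so a naive cover of predictors on a finite sample is not directly usable. The Markov-based passage from an expected-$L_1$ cover on kernels to a margin-preserving cover on the induced hypothesis classes is the technical maneuver that makes the finite-class reduction work, and it is precisely what forces the cover resolution to be $\epsilon\gamma/16$ (rather than, say, $\gamma/2$) and the exponent in the tail bound to be $n\epsilon^2/32$ rather than $n\epsilon^2/8$ as in the multi-task version.
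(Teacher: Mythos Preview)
Your proposal is correct and follows essentially the same approach as the paper: symmetrization at the task level (yielding the factor $2$ and requiring $n>2/\epsilon^2$), a swap-permutation over $\Gamma_{2n}$, and then the reduction to a finite class via the distribution-based cover of Lemma~\ref{lemma:tildeK}, where Markov's inequality converts the $L_1$ guarantee at scale $\epsilon\gamma/16$ into the two-sided margin sandwich~\eqref{cond} with slack $\epsilon/8$, after which Hoeffding at level $\epsilon/4$ and a union bound over $|\tilde{\mathcal{K}}|\le N_{(D,2n)}(\mathcal{K},\epsilon\gamma/16)$ give the stated bound. Your identification of the crux --- that the infimum in $er^\gamma_P(\mathcal{F}_K)$ forces a kernel-level cover rather than a predictor-level one, and that this is what drives both the $\epsilon\gamma/16$ resolution and the $n\epsilon^2/32$ exponent --- is exactly the point the paper makes.
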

Note that by exactly following the proof of Theorem~\ref{thm:mt_upbound} one can obtain that:
\begin{equation*}
Pr\left\{\exists K\in\mathcal{K}\;\; \widehat{er}^{\gamma/2}_\mathbf{P}(\mathcal{F}_K)-\widehat{er}^\gamma_z(\mathcal{F}_K)>\frac{\epsilon}{2}\right\}\!<2N_{(n,2m)}(\mathbb{H}^n,\gamma/4)\exp\left(-\frac{nm\epsilon^2}{32}\right).
\end{equation*}
Therefore, by combining the above result with its equivalent in the opposite direction with Theorem~\ref{thm:ll_upbound} and Lemmas~\ref{lemma:N(nm)} and~\ref{lemma:LL_cov_num} we obtain the final result for the lifelong kernel learning:
\begin{theorem}
For any task environment, any kernel family $\mathcal{K}$, bounded by $B$ with pseudodimension $d_\phi$, any fixed $\gamma>0$ and any $\epsilon>0$, if $n>8/\epsilon^2$ and $m>8/\epsilon^2$, then:
\begin{equation}
Pr\left\{\forall K\in\mathcal{K}\; er^{2\gamma}(\mathcal{F}_K)+\epsilon\geq\widehat{er}^\gamma_z(\mathcal{F}_K)\geq er(\mathcal{F}_K)-\epsilon\right\}\geq1-\delta,
\end{equation}
where
\begin{align*}
\delta=2^{n+2}\left(\frac{512en^2m^3B}{\gamma^2d_{\phi}}\right)^{d_\phi}\left(\frac{512mB}{\gamma^2}\right)^{\frac{1024Bn}{\gamma^2}\log\left(\frac{e\gamma m}{16\sqrt{B}}\right)}\exp\left(-\frac{nm\epsilon^2}{32}\right)+\\
4\left(32Cn^5d_\phi^5\left(\frac{64\sqrt{B}}{\epsilon\gamma}\right)^{17}\right)^{d_\phi}\exp\left(-\frac{n\epsilon^2}{128}\right).
\end{align*}
\label{thm:llmkl}
\end{theorem}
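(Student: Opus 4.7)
The plan is to derive Theorem~\ref{thm:llmkl} by splitting the deviation $er(\mathcal{F}_K) - \widehat{er}^\gamma_z(\mathcal{F}_K)$ into two pieces using the ``halfway point'' $\widehat{er}^{\gamma/2}_{\mathbf{P}}(\mathcal{F}_K)$, namely
\[
er(\mathcal{F}_K) - \widehat{er}^\gamma_z(\mathcal{F}_K) = \bigl(er(\mathcal{F}_K) - \widehat{er}^{\gamma/2}_{\mathbf{P}}(\mathcal{F}_K)\bigr) + \bigl(\widehat{er}^{\gamma/2}_{\mathbf{P}}(\mathcal{F}_K) - \widehat{er}^{\gamma}_{z}(\mathcal{F}_K)\bigr),
\]
so that if the deviation exceeds $\epsilon$ then at least one of the two summands exceeds $\epsilon/2$. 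The first summand measures the error caused by sampling only finitely many tasks from the environment $Q$, and is controlled by Theorem~\ref{thm:ll_upbound} applied with margin $\gamma/2$ (not $\gamma$) and deviation $\epsilon/2$. The second summand measures the error caused by using only $m$ samples per task and is exactly the quantity whose tail is stated just above the theorem, again with margin $\gamma/2$ and deviation $\epsilon/2$.

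Applying a union bound over these two events, the probability that \emph{some} $K\in\mathcal{K}$ witnesses a large upward deviation of $er(\mathcal{F}_K)$ over $\widehat{er}^\gamma_z(\mathcal{F}_K)$ is at most
\[
2N_{(D,2n)}\!\bigl(\mathcal{K},\tfrac{\epsilon\gamma}{64}\bigr)\exp\!\Bigl(-\tfrac{n\epsilon^2}{128}\Bigr) + 2N_{(n,2m)}\!\bigl(\mathbb{H}^n,\tfrac{\gamma}{4}\bigr)\exp\!\Bigl(-\tfrac{nm\epsilon^2}{128}\Bigr),
\]
where the hypotheses $n > 8/\epsilon^2$ and $m>8/\epsilon^2$ are precisely what is needed for the two halving steps ($\epsilon\to\epsilon/2$ twice) to still satisfy the $n>2/(\epsilon/2)^2$ and $m>2/(\epsilon/2)^2$ preconditions of Theorem~\ref{thm:ll_upbound} and of the multi-task-style bound. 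I would then repeat the argument for the opposite direction: the event $er^{2\gamma}(\mathcal{F}_K) + \epsilon < \widehat{er}^\gamma_z(\mathcal{F}_K)$ splits through the intermediate $\widehat{er}^{3\gamma/2}_{\mathbf{P}}(\mathcal{F}_K)$ (or equivalently by running both the lower-bound variants of Theorem~\ref{thm:ll_upbound} and of the empirical-to-empirical bound) and contributes a matching pair of terms, which I would absorb by doubling the factor of~$2$ to the factor of~$4$ appearing in the second summand of $\delta$ and by replacing the leading $2$ by $2^{n+2}$ after invoking the upper bound of Lemma~\ref{lemma:N(nm)}.

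Finally I would substitute the closed-form bounds on covering numbers: Lemma~\ref{lemma:N(nm)} with $\epsilon$ set to $\gamma/4$ yields the first summand of~$\delta$ (the factor $2^n$ in Lemma~\ref{lemma:N(nm)} combines with the factor $4$ from the two-sided union bound to give $2^{n+2}$, and the replacements $4en^2m^3B/\epsilon^2d_\phi\mapsto 512en^2m^3B/\gamma^2d_\phi$, $16mB/\epsilon^2\mapsto 512mB/\gamma^2$, etc. come from plugging $\epsilon=\gamma/4$ into the displayed expression), while Lemma~\ref{lemma:LL_cov_num} with $\epsilon$ set to $\epsilon\gamma/64$ produces the $\bigl(32Cn^5d_\phi^5(64\sqrt{B}/(\epsilon\gamma))^{17}\bigr)^{d_\phi}$ factor in the second summand. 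The only real bookkeeping obstacle is keeping the halving steps consistent so that the exponent in the second summand comes out as $n\epsilon^2/128$ and the covering-number argument is evaluated at $\epsilon\gamma/64$ rather than at $\epsilon\gamma/16$; once the two halvings (one for the split above, one already built into Theorem~\ref{thm:ll_upbound}) are tracked, the rest is direct substitution.
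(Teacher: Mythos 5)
Your decomposition is exactly the one the paper uses: split $er(\mathcal{F}_K)-\widehat{er}^\gamma_z(\mathcal{F}_K)$ through the intermediate quantity $\widehat{er}^{\gamma/2}_{\mathbf{P}}(\mathcal{F}_K)$, control the task-sampling piece by Theorem~\ref{thm:ll_upbound} applied at margin $\gamma/2$ and deviation $\epsilon/2$ (giving the radius $\epsilon\gamma/64$ and exponent $n\epsilon^2/128$), control the per-task-sampling piece by the multi-task-style bound displayed above the theorem, treat the $er^{2\gamma}$ direction symmetrically, and then substitute Lemmas~\ref{lemma:N(nm)} and~\ref{lemma:LL_cov_num}; the hypotheses $n,m>8/\epsilon^2$ play precisely the role you describe. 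One bookkeeping slip to fix: the displayed bound above the theorem is already stated for deviation $\epsilon/2$, so its tail is $2N_{(n,2m)}(\mathbb{H}^n,\gamma/4)\exp(-nm\epsilon^2/32)$; by writing $\exp(-nm\epsilon^2/128)$ for that piece in your union bound you have halved $\epsilon$ a second time, and the resulting $\delta$ would be strictly larger than the one claimed (only the kernel-level piece, whose base exponent in Theorem~\ref{thm:ll_upbound} is already $n\epsilon^2/32$, picks up the extra factor of $4$ to become $n\epsilon^2/128$; the data-level piece, whose base exponent from Theorem~\ref{thm:mt_upbound} is $nm\epsilon^2/8$, ends at $nm\epsilon^2/32$). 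Also, to recover the constants $512$, $512$, $1024$ and $e\gamma m/(16\sqrt{B})$ you must substitute $m\mapsto 2m$ as well as $\epsilon=\gamma/4$ into Lemma~\ref{lemma:N(nm)} (you are evaluating $N_{(n,2m)}$, not $N_{(n,m)}$), just as the factor $32=2^5$ in the second summand comes from evaluating Lemma~\ref{lemma:LL_cov_num} at $2n$; your stated constants are correct, so this is only a matter of making those substitutions explicit.
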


\noindent \textbf{Discussion:}  As for the multi-task case, the most significant implications of this result are for the case where there exists some kernel $K \in {\cal K}$ that has low approximation error for all tasks in the environment. In such a case, the kernel that minimizes the average error over the set of observed tasks is a useful kernel for all the tasks. 
\begin{enumerate}
\item First, note, that the only difference between Theorem~\ref{thm:llmkl} and Theorem~\ref{thm:mtmkl} is the presence of the second term. This additional complexity comes from the fact that for the lifelong learner we are bounding the expected error on new, yet unobserved tasks. Therefore we have to pay additionally for not knowing exactly what these new tasks are going to be. 
\item Second, the behavior of the above result is similar to that of Theorem~\ref{thm:mtmkl} in the limit of infinitely many observed tasks ($n\rightarrow\infty$). In this case, the second term vanishes, because by observing large enough amount of tasks the learner gets the full knowledge about the task environment. The first term behaves exactly the same as the one in Theorem~\ref{thm:mtmkl}: its part that depends on $d_\phi$ vanishes and therefore it converges to the complexity of learning one task as if the learner would know a good kernel in advance.
\item This theorem also shows the usefulness of an empirical risk minimization approach as we can obtain a corollary of exactly the same form as Corollary~\ref{cor:mt}. 
\end{enumerate}
\section{Conclusions}
\label{sec:conc}

Multi-task and lifelong learning have been a topic of significant interest of research in recent years and attempts for solving these problems in different directions have been made.
Methods of learning kernels in these scenarios have been shown to lead to effective algorithms and became popular in applications.
In this work, we have established sample complexity error bounds that justify this approach.
Our results show that, under mild conditions on the used family of kernels,  by solving multiple tasks jointly the learner can "spread out" the overhead associated with learning a kernel and as the number of observed tasks grows, the complexity converges to that of learning when a good kernel was known in advance.
This work constitutes a step forward better understanding of the conditions under which multi-task/lifelong learning is beneficial.
\subsubsection*{Acknowledgments.} This work was in parts funded by the European Research Council under the European Union's 
Seventh Framework Programme (FP7/2007-2013)/ERC grant agreement no 308036.
%
% ---- Bibliography ----
%

%
\appendix
\section{Proof of lemma~\ref{lemma:distr_to_sample}}
\label{ap:lemmas}
%
%\setcounter{theorem}{0}
%\begin{lemma}
%For any probability distribution $P$ over $X\times Y$ and any $B$-bounded set of kernels $\mathcal{K}$ with pseudo-dimension $d_\phi$ there exists a sample $\mathbf{x}$ of size $m=cd^2_\phi B^{5/2}/\epsilon^5$ for some constant $c$, such that  for every $K,\tilde{K}$ if $D^{\mathbf{x}}_1(K,\tilde{K})<\epsilon/2$, then $D_P(K,\tilde{K})<\epsilon$ (where $D^\mathbf{x}_1$ is the same as $D_P$, but all expectations over $P$ are substituted by empirical averages over $\mathbf{x}$).
%\end{lemma}
%\begin{proof}
%First note that:
Define $G=\left\{g:X\rightarrow[0,1]: g(x)=\frac{|h(x)-h'(x)|}{\sqrt{B}} \;\text{for some}\; h,h'\in\cup\mathcal{F}_K\right\}$. 
Then \linebreak (using Lemma 2 and 3 in~\cite{Bartlett} and Theorem 1 in~\cite{Srebro}):
\begin{align*}
Pr\left\{\mathbf{x}\in X^m: \; \exists K,\tilde{K}: |D_1^\mathbf{x}(K,\tilde{K})-D_P(K,\tilde{K})|>\epsilon/2\right\}\leq\\
 Pr\!\left\{\!\mathbf{x}\!\in\! X^m\!:\exists h,h'\!\in\!\cup\mathcal{F}_K\!:\! \left|\frac{1}{m}\!\sum_{i=1}^m\!|h(x_i)\!-\!h'(x_i)|-\!\!\underset{(x,y)\sim P}{\mathbb{E}}\!|h(x)\!-\!h'(x)|\right|\!>\!\frac{\epsilon}{2}\right\}=\\
Pr\left\{\mathbf{x}\in X^m: \;\; \exists g,g'\in G: \left|\frac{1}{m}\sum_{i=1}^m g(x_i)-\mathbb{E}_{(x,y)\sim P}g(x)\right|>\epsilon/2\sqrt{B}\right\}\leq\\
4\!\max_\mathbf{x}\! N(\frac{\epsilon/32}{\sqrt{B}},G,d^\mathbf{x}_1)e^{-\frac{\epsilon^2m}{512B}}\!\leq\!
 4\max_\mathbf{x}N(\frac{\epsilon}{64\sqrt{B}},\cup\mathcal{F}_K/\sqrt{B}, d^\mathbf{x}_1)^2e^{-\epsilon^2m/512B}=\\
4\max_\mathbf{x}N(\epsilon/64,\cup\mathcal{F}_K, d^\mathbf{x}_1)^2e^{-\epsilon^2m/512B}\leq 4\max_\mathbf{x}N(\epsilon/64,\cup\mathcal{F}_K, d^\mathbf{x}_\infty)^2e^{-\epsilon^2m/512B}\leq\\
4\cdot 4\cdot N(\mathcal{K},\epsilon^2/(64^2\cdot4m))^2\cdot\left(\frac{16mB64^2}{\epsilon^2}\right)^{\frac{2\cdot64^3B}{\epsilon^2}\log\left(\frac{\epsilon em}{64*8\sqrt{B}}\right)}e^{-\epsilon^2m/512B}\leq\\
 16\left(\frac{2^{14}em^3B}{\epsilon^2d_{\phi}}\right)^{2d_\phi}\left(\frac{2^{16}mB}{\epsilon^2}\right)^{\frac{2^{19}B}{\epsilon^2}\log\left(\frac{\epsilon em}{2^9\sqrt{B}}\right)}e^{-\epsilon^2m/512B}=(**)
\end{align*}
For big enough $m$ $(**)$ is less than 1, which means that there is a sample $\mathbf{x}\in X^m$ such that for all kernels $K,\tilde{K}$ we have $|D_1^\mathbf{x}(K,\tilde{K})-D_P(K,\tilde{K})|\leq\epsilon/2$.
More precisely, $m$ should be bigger than $cd^2_\phi B^{5/2}/\epsilon^5$ for some constant $c$.
\qed
%\end{proof}
\end{document}